\documentclass[11pt,a4paper]{article}
\usepackage[utf8]{inputenc}
\usepackage[T1]{fontenc}
\usepackage{lmodern}
\usepackage[english]{babel}
\usepackage[margin=2.5cm]{geometry} % Page margins
\usepackage{graphicx} % For including figures
\usepackage{booktabs} % For better-
\usepackage{multirow}
\usepackage{natbib} % For author-year citations
\usepackage{amsmath} % For mathematical equations
\usepackage{amssymb} % For AMS symbols
\usepackage{subcaption} % For subfigure environment
\usepackage{setspace} % For line spacing
\usepackage{caption} % For customizing captions
\usepackage{bm,bbm}% improve math presentation
\usepackage{float}
\usepackage{amsthm,amsmath,amsfonts,amssymb}
\PassOptionsToPackage{hyphens}{url}
\usepackage[colorlinks=true, citecolor=blue, linkcolor=blue, urlcolor=blue]{hyperref} % For hyperlinks
\usepackage[ruled]{algorithm2e}
\usepackage{algpseudocode}
\usepackage{cleveref}

\def\bx{\bm{x}}

\def\eps{\epsilon}

\def\by{\bm{y}}

\def\R{\mathbbm{R}}

\def\E{\mathbbm{E}}

\def\tar{\textup{tar}}
\def\src{\textup{src}}
\def\test{\textup{test}}
\def\tr{\textup{tr}}
\DeclareMathOperator*{\argmin}{arg\,min} 
\DeclareMathOperator*{\argmax}{arg\,max} 

\newtheorem{assumption}{Assumption}
\newtheorem{theorem}{Theorem}
\newtheorem{lemma}{Lemma}
\newtheorem{definition}{Definition}

\counterwithin{lemma}{section}
\counterwithin{theorem}{section}
\usepackage{xr}
\usepackage{multirow}
\setcitestyle{round}

\let\oldfootnote\footnote
\renewcommand{\footnote}{\fontsize{9}{11}\selectfont\oldfootnote}

\title{Context-Constrained Transfer Learning for Tabular Foundation Models via Data Distillation}
\author{
    Yijun Lin \\
    Renmin University of China \\
    \and
    Sai Li \thanks{This research was supported by National Natural Science Foundation of China (12571314) and Shenzhen Science and Technology Program (AI2026019). Correspondence to saili@tsinghua.edu.cn}\\
     Tsinghua University
}
\date{} % Remove date

\begin{document}

\maketitle

\begin{abstract}
Tabular Foundation Models (TFMs) have demonstrated strong empirical performance as black-box inference engines through in-context learning. However, their use in transfer learning is limited by two obstacles: strict context-size constraints and sensitivity to distribution shifts between source and target tasks. Directly pooling heterogeneous source data can therefore lead to negative transfer. To address these challenges, we propose Context-Constrained Transfer Learning via ANchoring and DIstillation (TL-ANDI), a posterior-aware distillation framework for TFMs. TL-ANDI constructs a compact source context by solving a budget-constrained optimal transport problem whose cost jointly measures target covariate coverage and posterior compatibility. The selected anchor samples are then equipped with locally distilled labels and combined with a residual calibration step using target data. 
% {\color{red}We establish three theoretical guarantees: a submodular approximation guarantee for the OT anchoring objective, an oracle inequality showing that the distilled context approximates the target regression graph whenever transferable source regions exist, and a validation-based no-negative-transfer guarantee for the complete black-box procedure.} 
We establish two formal guarantees: an oracle inequality showing that the distilled source context is target-compatible over the observed test region whenever transferable source regions exist, and a validation-based no-negative-transfer guarantee for the complete black-box procedure. Extensive empirical evaluations on simulated and real-world datasets demonstrate that TL-ANDI robustly mitigates negative transfer and improves over naive transfer learning baselines in both regression and classification problems.
\end{abstract}

% AMS subject classification

\textbf{Keywords:} transfer learning, tabular foundation model, optimal transport, data distillation, posterior shift

\textbf{Mathematics Subject Classification (2020):} 62G05, 68T05

\section{Introduction}

Tabular Foundation Models (TFMs), such as TabPFN \citep{hollmann2025accurate} and Limix \citep{zhang2025limix} have recently achieved remarkable success in tabular data analysis, largely driven by the paradigm of In-Context Learning (ICL). By taking inputs that include the training data $(X^{(\tr)},\bm{y}^{(\tr)})\in\R^{n_{\tr}\times (p+1)}$ and the test covariates matrix $X^{(\test)}\in\R^{n_{\test}\times p}$ as context, a TFM acts as a black-box inference engine, directly outputting predictions $\widehat{\by}^{(\test)}\in\R^{n_{\test}}$.

Despite their empirical superiority on standard benchmarks, current TFMs are inherently restricted by their single-task modeling nature. They operate under the implicit assumption that the context data originates from a single, homogeneous distribution. In practice, however, it is often necessary to integrate knowledge from multiple heterogeneous source domains to improve predictive performance on a target task—a scenario that falls squarely into the realm of transfer learning \citep{torrey2010transfer}. To address this limitation, we investigate the adaptation of TFMs, specifically focusing on TabPFN, for transfer learning across heterogeneous tabular datasets.

Directly pooling source and target data into the TFM's context is usually infeasible due to two fundamental bottlenecks. First, from a statistical perspective, TFMs are highly sensitive to distribution shift; incorporating source data with a differing distribution can cause negative transfer and severely degrade the model's accuracy on the target task. Second, from an architectural perspective, models like TabPFN are strictly bounded by their maximum context size, making it impossible to ingest large-scale source datasets.

To simultaneously overcome these dual challenges of distribution shift and bounded context length, we propose a novel framework: context-constrained transfer learning via posterior-aware optimal transport distillation. Instead of indiscriminately pooling data, our method strategically distills an informative subset of source samples under size constraint to augment the target context, thereby maximizing knowledge transfer while strictly adhering to the model's architectural limits.

\subsection{Problem Formalization}
Let $\mathcal{D}^{(\src)} = \{\bx^{(\src)}_{i}, y^{(\src)}_{i}\}_{i=1}^{n_{\src}}$ be a large-scale source dataset and $\mathcal{D}^{(\tar)}=\{\bx^{(\tar)}_{i}, y^{(\tar)}_{i}\}_{i=1}^{n_{\tar}}$ be a relatively small-scale target dataset. Our goal is to make predictions for test data which are sampled from the target population. For the test data, we observe test covariates $\{\bx_i^{(\test)}\}_{i=1}^{n_{\test}}$ at the training phase.

% TFMs, denoted as the operator $\mathcal{P}$, generate predictive estimates for target queries based on a provided context $\mathcal{C}^{(\tr)}\cup\mathcal{C}^{(\test)}$: 
%\begin{equation}
%    \widehat{y}^{(\test)} = \mathcal{P}(\mathcal{C}^{(\test)} \mid \mathcal{C}^{(\tr)},\mathcal{C}^{(\test)}),
%\end{equation}

\begin{definition}[TFM operator]
   We denote a TFM by the operator $\mathcal P$. Given a labeled training context
$\mathcal C^{(\tr)}
    =
    \{(\bx_i,y_i)\}_{i=1}^{n_{\tr}}$
and a set of query covariates $Q=\{\bm q_j\}_{j=1}^{n_q}, \bm q_j\in\mathbb R^p$, the TFM outputs
$\mathcal P(Q, \mathcal C^{(\tr)})
    \in \mathbb R^{n_q}$. 
\end{definition}
In the above definition, \(\mathcal C^{(\tr)}\) provides the labeled examples used as context,
whereas $Q$ contains the unlabeled covariates at which predictions are
required. For instance, in the single-task target-only setting, where the source data is not available, we have query inputs $Q = X^{(\test)}$, training context $\mathcal C^{(\tr)} = \mathcal{D}^{(\tar)}$. Thus, 
\[
    \widehat{\bm y}^{(\test)}
    =
    \mathcal P(X^{(\test)},\mathcal D^{(\tar)}).
\]
With source data at hand, we investigate how to summarize
\(\mathcal D^{(\src)}\) and \(\mathcal D^{(\tar)}\) into a compact labeled
context under the TFM context-size constraint. Due to the architectural constraints of the Transformer-based Prior-Data Fitted Network, the cardinality of the context set is strictly bounded. Specifically, for TabPFN, its constraint is $(n_{\tr}+n_{\test})p\leq 100,000$ (\url{https://ux.priorlabs.ai/predict}). In a specific task, the dimension $p$ and $n_{\test}$ are always pre-determined. Therefore, the constraint is essentially $n_{\tr}\leq n_{\max}$ for some sample size limit $n_{\max}$.

In summary, transfer learning with TFMs presents two primary challenges: the large volume and the inherent heterogeneity of source data. Because the in-context learning framework constrains the number of training samples during inference, the vast volume of source data must be efficiently summarized into a limited context window. Furthermore, the single-task nature of current TFMs makes them sensitive to internal data heterogeneity. Consequently, it is critical to selectively incorporate only the most informative source data into the transfer learning context.

\subsection{Our Contributions}

To address the challenges of integrating large-scale heterogeneous data into TFMs, we introduce a transfer learning framework centered on posterior-aware optimal transport anchoring and local data distillation, termed TL-ANDI. Our contributions are summarized as follows.

First, we formulate source-context construction as a \emph{budget-constrained optimal transport problem}. The proposed OT anchoring objective selects at most $n_{\max}$ source anchors by minimizing a transport cost from the observed test covariates to the selected source atoms. The cost contains two interpretable components: a covariate distance term that promotes target covariate coverage, and a posterior discrepancy term that penalizes source regions whose conditional mean is incompatible with the target task. This replaces independent heuristic sampling by a single optimization objective that is directly tied to transfer risk.

Second, after selecting the anchors, TL-ANDI locally distills source labels through kernel smoothing and then applies a residual calibration step using the target data. This preserves the black-box nature of TFMs: no task-specific model class needs to be specified, and the TFM is only queried through its standard in-context learning interface.

Third, we provide theoretical guarantees for the proposed pipeline. 
% the OT anchoring objective is a facility-location-type submodular objective and hence admits an efficient greedy algorithm with a constant-factor approximation guarantee.
We establish an oracle bound for the target-context approximation error of the distilled source context, decomposing the error into transferability, source-distillation, target-pilot, and optimization components. Finally, by including the target-only predictor in the validation stage, the complete procedure enjoys a no-negative-transfer guarantee up to a validation error term.

Extensive empirical evaluations across diverse simulation setups and two real-world datasets, spanning both regression and classification tasks, demonstrate that TL-ANDI robustly mitigates negative transfer and often achieves competitive or superior performance.

\subsection{Related Work}
\label{sec1-3}
Transfer learning in conventional statistical models has been extensively studied recently. Recent works have studied transfer learning approaches for nonparametric regression \citep{cai2021transfer,reeve2021adaptive} and high-dimensional parametric models \citep{li2022transfer,tian2023transfer,li2024estimation} among many others.  However, these transfer learning methods are based on white-box models such that the functional class of the prediction model needs to be pre-specified. On the other hand, transfer learning based on black-box models, especially the TFMs in the in-context learning framework, is barely studied.

Tabular Foundation Models have emerged as a powerful alternative, utilizing in-context learning to make predictions without task-specific tuning. While early versions like TabPFN \citep{hollmann2025accurate} were restricted to small-scale datasets, recent advancements have pushed these boundaries. TabPFN-2.5 \citep{grinsztajn2025tabpfn} has scaled in-context learning capabilities to 50,000 samples and 2000 features, while \citet{qu2025tabicl} introduce a two-stage architecture to handle up to 500,000 samples. Despite these architectural improvements, TFMs can require large computational sources when dealing with large-scale datasets and remain highly sensitive to data heterogeneity and distribution shifts, often leading to performance degradation when source and target distributions diverge significantly.
\citet{ye2025closer} show that TabPFN v2’s performance degrades on both large-scale and high-dimensional datasets but it can be used as a feature extractor. \citet{matabdpt} propose a method to combine in-context learning-based retrieval with self-supervised learning to train tabular foundation models.

To improve the robustness of TFMs, several adaptation strategies have been proposed.
\citet{kang2025learning} introduce a framework for tabular data distillation via column-embedding representations to protect privacy.
\citet{thomas2024retrieval} propose to retrieve a personalized training dataset as a local context for each test point. However, this method does not work well when posterior shift exists. \citet{helli2024drift} address temporal distribution shifts by training a transformer on millions of synthetic datasets where the underlying structural causal models (SCMs) gradually change over time.

Optimal transport provides a principled language for comparing distributions and constructing representative summaries under geometric costs \citep{villani2009optimal,peyre2019computational}. In contrast to classical domain-alignment methods that transport one full empirical distribution to another, our setting has a distinct context-budget constraint imposed by TFMs. This leads to a budget-constrained transport-based selection problem, where the goal is to
choose a small set of source anchors that jointly cover the observed test covariates under a
posterior-aware cost. The resulting objective is closely related to facility-location-type selection
objectives, which motivates the greedy anchor-selection implementation used in
our algorithm. Our contribution is to combine this budgeted OT view with a posterior-aware cost and source-label distillation for black-box TFM transfer learning.

\subsection{Organization}
The remainder of this paper is organized as follows. In Section \ref{sec2}, we introduce the methodological framework of TL-ANDI. Section \ref{sec3} establishes the theoretical foundations of our approach, providing guarantees for OT anchoring, data distillation, and validation-based safe transfer. Section \ref{sec4} presents extensive empirical evaluations, including regression and classification simulation studies under various distribution shift scenarios. In Section \ref{sec5}, we apply our proposal to two real datasets considering regression and classification tasks, respectively. We conclude with a discussion of our findings and potential avenues for future research in Section \ref{sec6}. 

\section{Method}
\label{sec2}
We introduce the proposed method in this section. In Section \ref{sec2-1}, we first introduce a residual-based transfer learning framework for TFMs operating under a strict budget of $n_{\max}$ source samples. Subsequently, in Section \ref{sec2-2}, we detail our proposed three-stage methodology to construct a compact source context through posterior-aware OT anchoring, local label distillation, and validation-based safe aggregation.

\subsection{Transfer Learning Using TFMs with One Informative Source}
\label{sec2-1}

This section introduces the foundational building block of our methodology: a vanilla transfer learning approach for TFMs. In this initial formulation, we temporarily set aside the complexities associated with large-scale volume and internal source heterogeneity. We simply treat the unified source data as a single domain and model the target domain residuals to calibrate for distribution shifts. Building upon the principles established by \citet{li2022transfer}, we propose a two-stage transfer learning procedure specifically tailored for the black-box operational nature of TFMs.

\begin{algorithm}[H]
\textbf{Input}: Target data $(X^{(\tar)}, \bm{y}^{(\tar)})$ of size $n_{\tar}$, source data $\mathcal{D}^{(\src)}=(X^{(\src)}, \bm{y}^{(\src)})$ of size $n_{\src}$, and test covariates $X^{(\test)}$. %Context constraint $n_{\max} \geq \max\{n_{\tar}, n_{\src}\}$.

\textbf{Output}: Predicted values $\widehat{\bm y}^{(\test)}$.

\textbf{Step 1. Learn the source prediction rule.} \\
Prompt the TFM, $\mathcal P$, using the source data as the training context,
$\mathcal C^{(\tr)}=\mathcal D^{(\src)}$, and using the target and test covariates
as query inputs $Q=(X^{(\tar)},X^{(\test)})$. Denote

\[
\left(
    \widehat f^{(\src)}(X^{(\tar)}),
    \widehat f^{(\src)}(X^{(\test)})
\right)
=
\mathcal P\left(
    (X^{(\tar)},X^{(\test)}),\mathcal D^{(\src)}
\right).
\]

\textbf{Step 2. Estimate the target residual bias.} \\
Compute the prediction residuals on the observed target data:
$$\widehat{\bm{z}}^{(\tar)} = \bm{y}^{(\tar)} - \widehat{f}^{(\src)}(X^{(\tar)}).$$
Next, we prompt TFM using the target features and residuals
as the residual-training context,
$\mathcal C^{(\tr)}=(X^{(\tar)},\widehat{\bm z}^{(\tar)})$,
and using the test covariates as the query inputs to be predicted,
$Q=X^{(\test)}$:

$$\widehat{\bm u}^{(\test)}
=
\mathcal P\left(X^{(\test)},(X^{(\tar)},\widehat{\bm z}^{(\tar)})\right).$$

\textbf{Step 3. Final Prediction}: \\
$$\widehat{\bm {y}}^{(\test)} = \widehat{f}^{(\src)}(X^{(\test)}) + \widehat{\bm u}^{(\test)}.$$

\caption{Vanilla Transfer Learning for Black-Box TFMs (Vanilla-TL)}
\label{alg-tl}
\end{algorithm}

Algorithm~\ref{alg-tl}, which we refer to as Vanilla-TL, operates under the assumption that an informative source dataset is available and adheres to the sample size limit $n_{\src}\leq n_{\max}$. In practice, however, raw source data often significantly exceeds this architectural budget and exhibits internal heterogeneity, which can precipitate negative transfer. To address these dual challenges, the following subsection details our data distillation pipeline, designed to construct a compact and highly informative source context.

\subsection{Source Data Distillation}
\label{sec2-2}
To construct a source context that both respects the TFM context budget and remains informative for the target task, we propose a posterior-aware optimal transport distillation procedure. The procedure contains three stages: (i) posterior-aware OT anchoring, which jointly covers the observed test covariates and avoids source regions with large posterior mismatch; (ii) local label distillation, which replaces noisy source labels by smoothed pseudo-labels at the selected anchors; and (iii) validation-based aggregation, which selects among candidate transferred predictors and includes the target-only predictor as a safety option. 

\subsubsection{Stage 1: Posterior-Aware OT Anchor Selection}
\label{sta-1}
The goal of the anchoring step is to select a subset $S\subseteq[n_{\src}]$ with $|S|\leq n_{\max}$ from the source covariates. Instead of independently sampling source observations, we select the anchors jointly by minimizing a transport cost from the empirical test covariate distribution to the selected source atoms.

Define a locally distilled source mean at every source candidate:
\begin{equation}
\label{eq-pilot-src}
    \widetilde y_i^{(h)} = \frac{\sum_{r=1}^{n_{\src}} K_h(\bx_r^{(\src)}-\bx_i^{(\src)})y_r^{(\src)}}{\sum_{r=1}^{n_{\src}} K_h(\bx_r^{(\src)}-\bx_i^{(\src)})},\qquad i=1,\ldots,n_{\src},
\end{equation}
where  $K_h(\cdot) = h^{-p}K(\cdot/h)$ is a symmetric kernel function and $h$ denotes bandwidth.  
We also fit a target-only TFM predictor $\widehat f^{(\tar)}$ using the calibration target data and evaluate it at the source covariates. The estimated posterior discrepancy score is then
\begin{equation}
\label{eq-post-score}
    \widehat\Delta_i^{(h)} = \widetilde y_i^{(h)}-\widehat f^{(\tar)}(\bx_i^{(\src)}),\qquad i=1,\ldots,n_{\src}.
\end{equation}
Compared with using the raw residual $y_i^{(\src)}-\widehat f^{(\tar)}(\bx_i^{(\src)})$, the smoothed score in \eqref{eq-post-score} targets the conditional-mean discrepancy and is less sensitive to source label noise.

Define the cost of transporting the $j$-th test covariate to the $i$-th source candidate as
\begin{equation}
\label{eq-ot-cost}
    \widehat c_{ji}^{(\lambda,h)} = \|\bx_j^{(\test)}-\bx_i^{(\src)}\|_2^2 + \lambda\{\widehat\Delta_i^{(h)}\}^2,
\end{equation}
where $\lambda \geq 0$ is a tuning parameter. 
The first term encourages selected source anchors to cover the test covariate distribution, while the second term discourages source regions whose conditional mean is far from the target task. We select the anchor set by solving
\begin{equation}
\label{eq-ot-select}
    \widehat S_{\lambda,h}\in \argmin_{S\subseteq[n_{\src}],\ |S|\leq n_{\max}} \widehat{\mathcal T}_{\lambda,h}(S),\qquad
    \widehat{\mathcal T}_{\lambda,h}(S)=\frac1{n_{\test}}\sum_{j=1}^{n_{\test}}\min_{i\in S}\widehat c_{ji}^{(\lambda,h)}.
\end{equation}
The objective in \eqref{eq-ot-select} can be viewed as a budget-constrained
transport-based assignment objective from the observed test covariates to a
small set of selected source anchors. Indeed, for any fixed $S$, we can show that
\begin{equation}
\label{eq-semi-ot}
\widehat{\mathcal T}_{\lambda,h}(S)=
\min_{\pi\geq0}\sum_{j=1}^{n_{\test}}\sum_{i=1}^{n_{\src}}\pi_{ji}\widehat c_{ji}^{(\lambda,h)}
\end{equation}
subject to $\sum_{i=1}^{n_{\src}}\pi_{ji}=1/n_{\test}$ and $\pi_{ji}=0$ whenever $i\notin S$. Thus, the empirical test distribution is transported to at most $n_{\max}$ selected source atoms, while
avoiding source samples with large estimated posterior discrepancy.

In practice, we use a simple greedy algorithm. Let $C_0\geq \max_{j,i}\widehat c_{ji}^{(\lambda,h)}$ be a finite dummy cost and initialize $S^{(0)}=\emptyset$ and $d_j(S^{(0)})=C_0$. At the $t$-th step, we select
\begin{equation}
\label{eq-greedy}
    i_t\in \argmax_{i\notin S^{(t-1)}}\sum_{j=1}^{n_{\test}}\left[d_j(S^{(t-1)})-\min\{d_j(S^{(t-1)}),\widehat c_{ji}^{(\lambda,h)}\}\right],
\end{equation}
where $d_j(S)=\min\{C_0,\min_{r\in S}\widehat c_{jr}^{(\lambda,h)}\}$ and update $S^{(t)}=S^{(t-1)}\cup\{i_t\}$. This update selects the source point that yields the largest reduction in the current transport cost.

\subsubsection{Stage 2: Local Kernel Distillation}
\label{sta-2}
Once the OT anchor set $\widehat S_{\lambda,h}$ is obtained, we form the distilled source context
\begin{equation}
\label{eq-distill-context}
    \widetilde{\mathcal D}_{\lambda,h}^{(\src)}=\left\{\left(\bx_i^{(\src)},\widetilde y_i^{(h)}\right):i\in\widehat S_{\lambda,h}\right\},
\end{equation}
where $\widetilde y_i^{(h)}$ is defined in \eqref{eq-pilot-src}. The same local smoothing operator therefore plays two roles: it estimates posterior discrepancy for anchor selection and supplies denoised labels for the selected source anchors.

To see why this step can reduce variance, consider a generic nonparametric model $y=f(\bx)+\epsilon$ with $\textup{Var}(\epsilon)=\sigma_{\src}^2$. The variance of $\widetilde y_i^{(h)}$ is approximately $\sigma_{\src}^2/m_i$, where $m_i$ is the effective number of source observations within the bandwidth $h$ around $\bx_i^{(\src)}$. Hence, local distillation converts noisy source observations into smoother pseudo-observations while keeping the context size fixed at $n_{\max}$.

\subsubsection{Stage 3: Aggregation for Tuning Parameter Selection}
\label{sta-3}
The proposed OT anchoring and distillation procedure involves two tuning parameters: the bandwidth $h\in\mathcal H$ and the posterior penalty $\lambda\in\Lambda$. We partition the target data $\mathcal D^{(\tar)}$ into a calibration fold $\mathcal D^{(\textup{cal})}$ and an independent validation fold $\mathcal D^{(\textup{val})}$. For each $(h,\lambda)$, we construct the distilled source context in \eqref{eq-distill-context}, run Algorithm \ref{alg-tl} using $\widetilde{\mathcal D}_{\lambda,h}^{(\src)}$ as the source data and $\mathcal D^{(\textup{cal})}$ as the target data, and obtain a transferred predictor $\widehat f_{\lambda,h}$. In addition, we include the target-only predictor $\widehat f^{(\tar)}$ trained only on $\mathcal D^{(\textup{cal})}$ as a candidate. The predictor is selected by validation error.

\begin{algorithm}[H]
\caption{Transfer Learning via Posterior-Aware OT Anchoring and Distillation (TL-ANDI)}
\label{alg-distill}

\textbf{Input}: Source data $\mathcal{D}^{(\src)}$, target data $\mathcal{D}^{(\tar)}$, test covariates $X^{(\test)}$, context budget $n_{\max}$, bandwidth set $\mathcal{H}$, and posterior penalty set $\Lambda$.\\
\textbf{Output}: Predicted values $\widehat{\bm{y}}^{(\test)}$.

\vspace{0.5em}
\textbf{Step 0. Data splitting}: \\
Randomly split $\mathcal{D}^{(\tar)}$ into two disjoint subsets:
$\mathcal{D}^{(\textup{cal})}$ and $\mathcal{D}^{(\textup{val})}$,
with corresponding index sets $\mathcal{N}^{(\textup{cal})}$ and
$\mathcal{N}^{(\textup{val})}$. Fit a target-only predictor
$\widehat f^{(\tar)}$ on $\mathcal D^{(\textup{cal})}$.

\vspace{0.5em}
\textbf{Step 1. Candidate model construction.}

\textbf{For} each $h\in\mathcal H$ and $\lambda\in\Lambda$ \textbf{do}:\\

 For given $h$, compute locally
distilled source labels $\widetilde y_i^{(h)}$ by
\eqref{eq-pilot-src}. Compute posterior discrepancy scores
$\widehat\Delta_i^{(h)}$ by \eqref{eq-post-score} and costs
$\widehat c_{ji}^{(\lambda,h)}$ by \eqref{eq-ot-cost}. Select an
anchor set $\widehat S_{\lambda,h}$ of size at most $n_{\max}$ by
the greedy update \eqref{eq-greedy}.\\

 Form $\widetilde{\mathcal D}_{\lambda,h}^{(\src)}$ by
\eqref{eq-distill-context}, using the anchor set
$\widehat S_{\lambda,h}$ and the bandwidth $h$.

Execute Algorithm~\ref{alg-tl} using
$\widetilde{\mathcal D}_{\lambda,h}^{(\src)}$ as the source data
and $\mathcal D^{(\textup{cal})}$ as the target data. Let
$\widehat f_{\lambda,h}$ denote the resulting predictor.\\

\textbf{End For}

\vspace{0.5em}
\textbf{Step 2. Validation selection.} \\
Select the final predictor from the candidate set
\[
\mathcal F
=
\left\{\widehat f^{(\tar)}\right\}
\cup
\left\{
\widehat f_{\lambda,h}:
\lambda\in\Lambda,\,
h\in\mathcal H
\right\}
\]
by minimizing the validation error
\[
\widehat f^{(\mathrm{sel})}
\in
\argmin_{\widehat f\in\mathcal F}
\sum_{i\in\mathcal N^{(\textup{val})}}
\left(
y_i^{(\tar)}-\widehat f(\bx_i^{(\tar)})
\right)^2 .
\]

Generate the test predictions by
\[
\widehat y_i^{(\test)}
=
\widehat f^{(\mathrm{sel})}(\bx_i^{(\test)}),
\qquad
i=1,\ldots,n_{\test}.
\]

\end{algorithm}

\section{Theoretical Guarantees}
\label{sec3}
In this section, we provide theoretical guarantees for the proposed procedure.  We first show that the selected and distilled source context is target-compatible over the observed test covariates whenever a transferable source subset exists. Then, we prove a no-negative-transfer guarantee for the complete validation-selected procedure.

\begin{assumption}
\label{ass1}
    Assume that the source covariates $\bx_i^{(\src)}\in\R^p$ are drawn independently from a continuous density $p_X^{(\src)}(\bx)$ supported on $[0,1]^p$, with $\inf_{\bx\in[0,1]^p}p_X^{(\src)}(\bx)\geq c_0>0$. Assume that $y_i^{(\src)}=f^{(\src)}(\bx_i^{(\src)})+\eps_i^{(\src)}$, where $\eps_i^{(\src)}$ are independent mean-zero sub-Gaussian errors with $\max_{i\leq n_{\src}}\E[(\eps_i^{(\src)})^2]\leq\sigma_{\max}^2$.
\end{assumption}

\begin{assumption}
\label{ass2}
    The source regression function $f^{(\src)}$ is H\"older smooth: for some finite $\theta_1>0$ and $\theta_2>0$,
    \[
       |f^{(\src)}(\bx)-f^{(\src)}(\bx')|\leq \theta_1\|\bx-\bx'\|_2^{\theta_2},\qquad \bx,\bx'\in[0,1]^p.
    \]
\end{assumption}

\begin{assumption}
    \label{ass3}
    The kernel function $K_h(\cdot)=h^{-p}K(\cdot/h)$ satisfies that $K:\mathbb{R}^p\to\mathbb{R}_+$ is bounded, compactly supported, and
    \[
    \int_{\mathbb{R}^p}K(u)\,du>0,\qquad \int_{\mathbb{R}^p}K^2(u)\,du>0.
    \]
\end{assumption}

% \subsection{OT anchoring as submodular optimization}
% For fixed $(\lambda,h)$, define $C_0\geq\max_{j,i}\widehat c_{ji}^{(\lambda,h)}$ and the OT utility function
% \begin{equation}
% \label{eq-utility}
%     \widehat{\mathcal U}_{\lambda,h}(S)=\frac1{n_{\test}}\sum_{j=1}^{n_{\test}}\left[C_0-\min\left\{C_0,\min_{i\in S}\widehat c_{ji}^{(\lambda,h)}\right\}\right].
% \end{equation}
% Maximizing $\widehat{\mathcal U}_{\lambda,h}(S)$ is equivalent to minimizing the OT cost in \eqref{eq-ot-select}, up to the fixed dummy cost $C_0$.

% \begin{proposition}[Greedy approximation for OT anchoring]
% \label{prop-submodular}
% For fixed $(\lambda,h)$, the utility $\widehat{\mathcal U}_{\lambda,h}(S)$ in \eqref{eq-utility} is nonnegative, monotone, and submodular in $S$. Therefore, the greedy algorithm in \eqref{eq-greedy} returns a set $\widehat S_{\lambda,h}^{\mathrm{gr}}$ satisfying
% \[
%     \widehat{\mathcal U}_{\lambda,h}(\widehat S_{\lambda,h}^{\mathrm{gr}})
%     \geq (1-e^{-1})\max_{|S|\leq n_{\max}}\widehat{\mathcal U}_{\lambda,h}(S).
% \]
% \end{proposition}

\subsection{Quality of the Distilled Source Context}
Let $f^{(\tar)}$ denote the target regression function and define the posterior shift
\[
    \Delta(\bx)=f^{(\src)}(\bx)-f^{(\tar)}(\bx).
\]
For any source subset $S$, define the oracle posterior-aware OT cost
\begin{equation}
\label{eq-oracle-cost}
    \mathcal T_{\lambda}(S)=\frac1{n_{\test}}\sum_{j=1}^{n_{\test}}\min_{i\in S}\left\{\|\bx_j^{(\test)}-\bx_i^{(\src)}\|_2^2+\lambda\Delta^2(\bx_i^{(\src)})\right\},
\end{equation}
and $\mathcal T_{\lambda,m}^{\star}=\inf_{|S|\leq m}\mathcal T_{\lambda}(S)$.

We measure the quality of a distilled context by the following target-context approximation error:
\begin{equation}
\label{eq-graph-discrepancy}
    \mathfrak D_{\tar}(S,h)=\frac1{n_{\test}}\sum_{j=1}^{n_{\test}}\min_{i\in S}\left[\|\bx_j^{(\test)}-\bx_i^{(\src)}\|_2^2+\left\{\widetilde y_i^{(h)}-f^{(\tar)}(\bx_j^{(\test)})\right\}^2\right].
\end{equation}
A small value of $\mathfrak D_{\tar}(S,h)$ means that the selected source anchors,
together with their distilled labels, provide a compact source context that is
well aligned with the target task over the observed test covariate region.

\begin{lemma}[Uniform local distillation error]
\label{lem-distill}
Suppose that Assumptions \ref{ass1}, \ref{ass2}, and \ref{ass3} hold. If $h=o(1)$ and $n_{\src}h^p\gtrsim\log n_{\src}$, then with probability at least $1-O(n_{\src}^{-c})$ for some constant $c>0$,
\[
    \max_{1\leq i\leq n_{\src}}\left|\widetilde y_i^{(h)}-f^{(\src)}(\bx_i^{(\src)})\right|
    \leq C\left\{h^{\theta_2}+\sqrt{\frac{\log n_{\src}}{n_{\src}h^p}}\right\}=:r_{\src}(h),
\]
where $C>0$ is a constant depending on the kernel, $c_0$, $\theta_1$, and $\sigma_{\max}$.
\end{lemma}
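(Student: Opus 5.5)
The plan is the standard bias–variance decomposition for the Nadaraya–Watson estimator evaluated at the design points, made uniform over $i$ by a union bound. Fix $i$ and write $w_{ir}=K_h(\bx_r^{(\src)}-\bx_i^{(\src)})/\sum_{s}K_h(\bx_s^{(\src)}-\bx_i^{(\src)})$. Since the weights are nonnegative and sum to one,
\[
\widetilde y_i^{(h)}-f^{(\src)}(\bx_i^{(\src)})=\sum_{r}w_{ir}\left\{f^{(\src)}(\bx_r^{(\src)})-f^{(\src)}(\bx_i^{(\src)})\right\}+\sum_{r}w_{ir}\eps_r^{(\src)} .
\]
I would bound the first term (bias) deterministically and the second (stochastic) term conditionally on the design.

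\textbf{Bias.} By Assumption \ref{ass3}, $K$ is supported in a ball of radius $R$, so $w_{ir}>0$ only if $\|\bx_r^{(\src)}-\bx_i^{(\src)}\|_2\le Rh$. Assumption \ref{ass2} then gives $|f^{(\src)}(\bx_r^{(\src)})-f^{(\src)}(\bx_i^{(\src)})|\le\theta_1R^{\theta_2}h^{\theta_2}$ for every $r$ with $w_{ir}>0$. Hence the bias is at most $\theta_1R^{\theta_2}h^{\theta_2}$ uniformly in $i$, with no probability needed, provided the denominator is positive. That positivity is addressed next.

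\textbf{Denominator lower bound (the main obstacle).} I need, with high probability and uniformly in $i$, a constant $c_1>0$ such that
\[
\widehat p_i:=\frac1{n_{\src}}\sum_r K_h(\bx_r^{(\src)}-\bx_i^{(\src)})\ge c_1,
\qquad
\frac1{n_{\src}}\sum_r K_h^2(\bx_r^{(\src)}-\bx_i^{(\src)})\le C_2h^{-p}.
\]

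The obstacle has two parts.

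\emph{Boundary effect.} Near the boundary of $[0,1]^p$ only part of the kernel's mass lies inside the cube. Since $h=o(1)$, for any $\bx\in[0,1]^p$ the set $\{u:\bx+hu\in[0,1]^p\}$ contains an orthant-like cone of fixed fraction. I would show $\E K_h(\bx'-\bx)\ge c_0\int_{A_{\bx}}K(u)\,du$, where $A_{\bx}$ is this cone intersected with the support of $K$. A uniform positive lower bound on $\int_{A_{\bx}}K$ does not follow automatically from $\int K>0$ alone. I would therefore first try to use the standard reading that $K$ is bounded below on a neighbourhood of the origin; failing that, I would restrict to a cone where $K$ has mass by symmetry of $K$.

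\emph{Dependence on the design point.} The evaluation point $\bx_i^{(\src)}$ is itself a design point. To handle this I would condition on $\bx_i^{(\src)}$ and apply Bernstein's inequality to the remaining $n_{\src}-1$ i.i.d. terms. Each summand is bounded by $\|K\|_\infty h^{-p}$ and has variance $O(h^{-p})$, so the deviation is $O(\sqrt{\log n_{\src}/(n_{\src}h^p)}+\log n_{\src}/(n_{\src}h^p))$. The diagonal term $r=i$ contributes only $K(0)/(n_{\src}h^p)=o(1)$. A union bound over $i\le n_{\src}$, together with $n_{\src}h^p\gtrsim\log n_{\src}$ with a large enough constant, yields failure probability $O(n_{\src}^{-c})$. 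The upper bound on $\sum_r K_h^2$ follows the same way using $\int K^2<\infty$.

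\textbf{Stochastic term.} Conditionally on the covariates, $\sum_r w_{ir}\eps_r^{(\src)}$ is a weighted sum of independent mean-zero sub-Gaussian variables, so it is sub-Gaussian with variance proxy $\sigma^2\sum_r w_{ir}^2$. On the high-probability event above,
\[
\sum_r w_{ir}^2=\frac{n_{\src}^{-2}\sum_r K_h^2}{\widehat p_i^{\,2}}\le\frac{C_2}{c_1^2\,n_{\src}h^p}.
\]
The sub-Gaussian tail with threshold $t=C\sqrt{\log n_{\src}/(n_{\src}h^p)}$ and a union bound over $n_{\src}$ indices then give failure probability $O(n_{\src}^{-c})$. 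Here I use the sub-Gaussian constant; $\sigma_{\max}$ enters via the sub-Gaussian norm bound.

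\textbf{Conclusion.} Adding the bias bound and the stochastic bound on the intersection of the two events gives $r_{\src}(h)$ with a constant depending on $K$, $c_0$, $\theta_1$ and the noise level. Integrating out the conditioning on the covariates preserves the probability bound.
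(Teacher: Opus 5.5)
Your proposal is correct and is essentially the paper's argument. Both use the same bias/stochastic decomposition, a deterministic H\"older bias bound from the compact support of $K$, a Bernstein-plus-union-bound lower bound on the denominators obtained by conditioning on $\bx_i^{(\src)}$, and a conditional sub-Gaussian tail with a union bound over $i$. At the boundary, your primary route of taking $K$ bounded below near $0$ plays the role of the paper's appeal to $K$ having ``positive mass around the origin''; your symmetry fallback would not suffice, since central symmetry does not force mass in every orthant. The only minor difference is that the paper skips your separate concentration step for $\sum_r K_h^2$ by using the pointwise inequality $K_h^2(\bm u)\le \|K\|_\infty h^{-p}K_h(\bm u)$. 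This gives $\sum_r w_{ir}^2\le \|K\|_\infty h^{-p}/\sum_r K_h(\bx_r^{(\src)}-\bx_i^{(\src)})$ directly from the denominator bound, with no upper bound on the density needed.
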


\begin{assumption}
\label{ass-target}
The target regression function $f^{(\tar)}$ is $L$-Lipschitz on $[0,1]^p$. Moreover, the target pilot estimator satisfies
\[
    \max_{1\leq i\leq n_{\src}}\left|\widehat f^{(\tar)}(\bx_i^{(\src)})-f^{(\tar)}(\bx_i^{(\src)})\right|\leq r_{\tar}
\]
with high probability.
\end{assumption}

For any selected set $\widehat S_{\lambda,h}$, define its empirical
optimization residual by
$\eta_m(\lambda,h)
    =
    \widehat{\mathcal T}_{\lambda,h}(\widehat S_{\lambda,h})
    -
    \inf_{1\le |S|\le m}\widehat{\mathcal T}_{\lambda,h}(S)$.

\begin{theorem}[Target-compatible context oracle bound]
\label{thm-graph}
Suppose Assumptions \ref{ass1}--\ref{ass-target} hold. Let $\widehat S_{\lambda,h}$ be any selected set satisfying the empirical optimization inequality
\[
    \widehat{\mathcal T}_{\lambda,h}(\widehat S_{\lambda,h})
    \leq \inf_{|S|\leq m}\widehat{\mathcal T}_{\lambda,h}(S)+\eta_m,
\]
where $m=n_{\max}$ and $\eta_m\geq0$ is defined above. Then, with high probability,
\[
    \mathfrak D_{\tar}(\widehat S_{\lambda,h},h)
    \leq C_{\lambda,L}\left\{\mathcal T_{\lambda,m}^{\star}+r_{\src}^2(h)+r_{\tar}^2+\eta_m\right\},
\]
where $C_{\lambda,L}>0$ depends only on $\lambda$ and the Lipschitz constant $L$.
\end{theorem}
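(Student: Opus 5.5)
We work on the intersection of the high-probability events of Lemma \ref{lem-distill} and Assumption \ref{ass-target}. On this event, for every source index $i$,
\[
|\widehat\Delta_i^{(h)}-\Delta(\bx_i^{(\src)})|\le r_{\src}(h)+r_{\tar}.
\]
This follows by writing
\[
\widehat\Delta_i^{(h)}-\Delta(\bx_i^{(\src)})=\{\widetilde y_i^{(h)}-f^{(\src)}(\bx_i^{(\src)})\}-\{\widehat f^{(\tar)}(\bx_i^{(\src)})-f^{(\tar)}(\bx_i^{(\src)})\}.
\]
Using $(a+b)^2\le 2a^2+2b^2$ in both directions then gives, uniformly in $(j,i)$,
\[
\widehat c^{(\lambda,h)}_{ji}\le \|\bx_j^{(\test)}-\bx_i^{(\src)}\|_2^2+2\lambda\Delta^2(\bx_i^{(\src)})+2\lambda(r_{\src}+r_{\tar})^2
\]
and
\[
\|\bx_j^{(\test)}-\bx_i^{(\src)}\|_2^2+\lambda\Delta^2(\bx_i^{(\src)})\le 2\widehat c^{(\lambda,h)}_{ji}+2\lambda(r_{\src}+r_{\tar})^2 .
\]
Since minima over $i\in S$ and averages over $j$ preserve such inequalities, we get, for every $S$,
\[
\widehat{\mathcal T}_{\lambda,h}(S)\le 2\mathcal T_\lambda(S)+2\lambda(r_{\src}+r_{\tar})^2
\quad\text{and}\quad
\mathcal T_\lambda(S)\le 2\widehat{\mathcal T}_{\lambda,h}(S)+2\lambda(r_{\src}+r_{\tar})^2 .
\]
The constant factor 2 is harmless and is absorbed into $C_{\lambda,L}$.

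Next we compare $\mathfrak D_{\tar}$ with the oracle cost $\mathcal T_\lambda$ pointwise in $(j,i)$. We decompose
\[
\widetilde y_i^{(h)}-f^{(\tar)}(\bx_j^{(\test)})=\{\widetilde y_i^{(h)}-f^{(\src)}(\bx_i^{(\src)})\}+\Delta(\bx_i^{(\src)})+\{f^{(\tar)}(\bx_i^{(\src)})-f^{(\tar)}(\bx_j^{(\test)})\}.
\]
The first term is at most $r_{\src}(h)$ by Lemma \ref{lem-distill}. The third term is at most $L\|\bx_j^{(\test)}-\bx_i^{(\src)}\|_2$ by Assumption \ref{ass-target}. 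Hence
\[
\|\bx_j^{(\test)}-\bx_i^{(\src)}\|_2^2+\{\widetilde y_i^{(h)}-f^{(\tar)}(\bx_j^{(\test)})\}^2\le (1+3L^2)\|\bx_j^{(\test)}-\bx_i^{(\src)}\|_2^2+3\Delta^2(\bx_i^{(\src)})+3r_{\src}^2 .
\]
The right side is at most $\max\{1+3L^2,3/\lambda\}$ times the oracle cost entry, plus $3r_{\src}^2$. This is the point where $\lambda>0$ is needed and where the dependence of $C_{\lambda,L}$ on $\lambda$ and $L$ enters. Taking $\min_{i\in S}$ and averaging over $j$ yields
\[
\mathfrak D_{\tar}(S,h)\le C'_{\lambda,L}\,\mathcal T_\lambda(S)+3r_{\src}^2(h).
\]

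Finally we chain the three comparisons with $S=\widehat S_{\lambda,h}$. We bound $\mathcal T_\lambda(\widehat S_{\lambda,h})$ by $2\widehat{\mathcal T}_{\lambda,h}(\widehat S_{\lambda,h})$ plus the perturbation term. The optimization inequality then gives at most $2\{\widehat{\mathcal T}_{\lambda,h}(S^\star)+\eta_m\}$, for any $S^\star$ with $|S^\star|\le m$ that nearly attains $\mathcal T^\star_{\lambda,m}$; since the set of candidates is finite, the infimum is attained. By the first comparison, this is in turn at most $4\mathcal T^\star_{\lambda,m}+2\eta_m+O(\lambda(r_{\src}^2+r_{\tar}^2))$. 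Combining gives the claim with $C_{\lambda,L}$ of order $\max\{1+L^2,1/\lambda\}(1+\lambda)$.

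The main obstacle is the step converting the posterior-shift penalty $\lambda\Delta^2$ into control of the label term in $\mathfrak D_{\tar}$. This forces the $1/\lambda$ dependence and requires $\lambda>0$, so we would state explicitly that $C_{\lambda,L}$ blows up as $\lambda\to0$. A secondary point is verifying that the "high probability" events of Lemma \ref{lem-distill} and Assumption \ref{ass-target} hold simultaneously, which follows by a union bound.
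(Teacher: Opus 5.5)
Your proof is correct and follows the paper's argument step for step. The steps are: the uniform bound $|\widehat\Delta_i^{(h)}-\Delta(\bx_i^{(\src)})|\le r_{\src}(h)+r_{\tar}$; the two-sided comparison between $\widehat{\mathcal T}_{\lambda,h}$ and $\mathcal T_\lambda$ via $(a+b)^2\le 2a^2+2b^2$; the three-term decomposition of $\widetilde y_i^{(h)}-f^{(\tar)}(\bx_j^{(\test)})$ with the Lipschitz bound; and the final chaining through the optimization inequality. Your explicit constants also make visible what the paper hides in $C_{\lambda,L}$: the argument needs $\lambda>0$, and the constant grows like $1/\lambda$ as $\lambda\to0$.
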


Theorem \ref{thm-graph} gives a direct interpretation of the proposed objective. The first term, $\mathcal T_{\lambda,m}^{\star}$, measures whether there exists a budget-$m$ source subset that both covers the test covariates and has small posterior discrepancy. The terms $r_{\src}^2(h)$ and $r_{\tar}^2$ quantify source-label distillation error and target-pilot estimation error. The term $\eta_m$ reflects the optimization
accuracy of the numerical anchor-selection algorithm.

\subsection{No-Negative-Transfer Guarantee}
The final step of Algorithm \ref{alg-distill} selects among transferred predictors and the target-only predictor using an independent validation fold. This yields a guarantee for the complete black-box procedure, regardless of the internal form of the TFM.

Let $\mathcal F=\{\widehat f_0,\widehat f_1,\ldots,\widehat f_M\}$ denote the finite candidate class, where $\widehat f_0$ is the target-only predictor $\widehat f^{(\tar)}$ and the remaining candidates correspond to different values of $(\lambda,h)$. Let $R(\widehat f)=\E[(Y^{(\tar)}-\widehat f(X^{(\tar)}))^2\mid\widehat f]$ be the target prediction risk.

\begin{theorem}[Validation oracle inequality and no negative transfer]
\label{thm-no-negative}
Suppose that Assumption \ref{ass1}  holds and {$\max_{0\leq k\leq M,i\in \mathcal{N}^{(\textup{val})}}(y_i^{(\tar)}-\widehat{f}_k(\bx_i^{(\tar)}))^2\leq B_n$}. For $\widehat f^{(\mathrm{sel})}$ obtained from Algorithm \ref{alg-distill}, with probability at least $1-\delta$,
\[
    R(\widehat f^{(\mathrm{sel})})
    \leq \min_{0\leq k\leq M}R(\widehat f_k)
    +2B_n\sqrt{\frac{\log\{2(M+1)/\delta\}}{2n_{\textup{val}}}}.
\]
In particular, because $\widehat{f}^{(\tar)}$ is the target-only predictor,
\[
    R(\widehat f^{(\mathrm{sel})})
    \leq R(\widehat{f}^{(\tar)})
    +2B_n\sqrt{\frac{\log\{2(M+1)/\delta\}}{2n_{\textup{val}}}}.
\]
\end{theorem}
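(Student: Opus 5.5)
The plan is the standard hold-out argument. We condition on the calibration fold $\mathcal D^{(\textup{cal})}$ and on the source data. Then every candidate $\widehat f_k$, $k=0,\ldots,M$, is a fixed function, because each is built from $\mathcal D^{(\src)}$, $\mathcal D^{(\textup{cal})}$ and $X^{(\test)}$ only. The validation pairs $(\bx_i^{(\tar)},y_i^{(\tar)})$, $i\in\mathcal N^{(\textup{val})}$, are i.i.d.\ draws from the target population and independent of these functions. For each $k$ define the empirical validation risk
\[
\widehat R_k=\frac1{n_{\textup{val}}}\sum_{i\in\mathcal N^{(\textup{val})}}\bigl(y_i^{(\tar)}-\widehat f_k(\bx_i^{(\tar)})\bigr)^2 .
\]
Conditionally, this is an average of i.i.d.\ variables with mean $R(\widehat f_k)$.

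First, we use the boundedness condition: the summands lie in $[0,B_n]$. Hoeffding's inequality, applied conditionally for each fixed $k$, gives
\[
\Pr\bigl(|\widehat R_k-R(\widehat f_k)|>t\bigr)\le 2\exp(-2n_{\textup{val}}t^2/B_n^2).
\]
A union bound over the $M+1$ candidates, with $t=B_n\sqrt{\log\{2(M+1)/\delta\}/(2n_{\textup{val}})}$, yields the following. With conditional probability at least $1-\delta$, we have $\max_k|\widehat R_k-R(\widehat f_k)|\le t$. Integrating over the conditioning variables makes this unconditional.

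Second, on this event, let $k^\star$ minimize $R(\widehat f_k)$. Since $\widehat f^{(\mathrm{sel})}$ minimizes $\widehat R_k$, the chain
\[
R(\widehat f^{(\mathrm{sel})})\le \widehat R_{\mathrm{sel}}+t\le \widehat R_{k^\star}+t\le R(\widehat f_{k^\star})+2t
\]
gives the first display. The second display follows because $\widehat f_0=\widehat f^{(\tar)}$ belongs to $\mathcal F$.

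The main obstacle is the bounded-loss condition. As stated, it is imposed on the observed validation residuals, but Hoeffding's inequality needs the loss bounded almost surely as a random variable, and the risk $R$ is an expectation over a fresh draw. I would read the hypothesis as an almost-sure bound $(Y^{(\tar)}-\widehat f_k(X^{(\tar)}))^2\le B_n$ on the support, conditionally on the fitted candidates. Alternatively, one could truncate at $B_n$ and absorb the tail into the high-probability event, which is what the sub-Gaussian noise in Assumption \ref{ass1} would support. Independence of $\mathcal D^{(\textup{val})}$ from the fitted candidates is guaranteed by the sample split in Step 0 of Algorithm \ref{alg-distill}.
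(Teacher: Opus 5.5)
Your proof is correct and follows the paper's argument. You condition on the source and calibration data so that the $M+1$ candidates are fixed, apply Hoeffding with a union bound to get $\max_k|\widehat R_k-R(\widehat f_k)|\le t$, and chain through the validation minimizer and $k^\star$. Your reading of $B_n$ as an almost-sure bound on the loss is what the paper's proof uses implicitly. One small correction to your side remark: Assumption~\ref{ass1} concerns only the source noise, so it would not by itself justify the truncation variant for target residuals.
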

Theorem \ref{thm-no-negative} shows that TL-ANDI cannot be substantially worse than the target-only TFM predictor, up to the statistical uncertainty of validation selection. This provides a formal safeguard against negative transfer while retaining the ability to exploit informative source data when the OT anchoring cost is small. In practice, it is reasonable to consider $B_n=O(\log n_{\textup{val}})$ because $\bx_i^{(\tar)}$ and $\epsilon_i^{(\tar)}$ are sub-Gaussian. If $\widehat{f}_k(\bx)$ is extremely large, it can be truncated using $\max_{i\in\mathcal{N}^{(\textup{val})}}|y_i^{(\tar)}|$.

\section{Simulation Studies}
\label{sec4}

In this section, we benchmark the empirical performance of the proposed TL-ANDI framework against comparable baselines through rigorous simulation studies. Across all evaluated methods, we adopt TabPFN-2.5 as the base model.
We structure our evaluation into regression simulations in Section~\ref{sec4-regression} and classification simulations in Section~\ref{sec4-classification}. 
The regression simulations evaluate the overall predictive performance of TL-ANDI under homogeneous and heterogeneous source settings. 
Within the regression studies, the ablation experiments further validate the individual methodological stages by separately quantifying the efficacy of posterior-aware OT anchoring and the variance reduction achieved by local kernel distillation. 
The classification simulations examine whether TL-ANDI remains effective for binary outcomes.

For TL-ANDI across all experiments, we report the results after refitting the selected predictor. This refitting step serves as a practical implementation that further enhances the final predictive accuracy.
Specifically, after the validation fold selects the final candidate in Algorithm~\ref{alg-distill}, 
we refit the selected predictor using the full target dataset 
$\mathcal D^{(\tar)}$ before generating the final test predictions. 
Appendix~\ref{app:final-refit} further examines this refitting step, and 
Table~\ref{tab:refit-effect} shows that it consistently reduces the MSE across all main regression simulation settings.

Beyond the refitting analysis in Appendix~\ref{app:final-refit}, 
we provide additional robustness and sensitivity simulation experiments in Appendix~\ref{app-fur-ex}. 
Specifically, Appendix~\ref{app-al1-targetonly-sensitivity} compares Vanilla-TL with the Target-only baseline under different sample-size regimes.
Appendix~\ref{app-muoverlap} studies the sensitivity of TL-ANDI to the overlap between heterogeneous source components. 
Appendix~\ref{app-tarsize} investigates the effect of the target sample size on the performance of TL-ANDI.

\subsection{Regression Simulation}
\label{sec4-regression}

To establish a rigorous comparison, we consistently evaluate three primary methods in regression experiments: the Target-only baseline, which exclusively utilizes the target data of size $n_{\tar}$ for prediction; TL-RAND, a naive transfer learning baseline that handles the oversized source data of size $n_{\src}$ by uniformly sampling a subset of budget $n_{\max}$ prior to executing Algorithm~\ref{alg-tl}; and our proposed TL-ANDI method introduced in Algorithm~\ref{alg-distill}. 
We evaluate the predictive performance by reporting the Mean Squared Error (MSE). Specifically, for test data $\{\bx_i,y_i\}_{i=1}^{n_{\test}}$ and a generic prediction rule $\widehat{f}(\cdot)$, we report
\begin{align}
\textup{MSE}(\widehat{f})= \frac{1}{n_{\test}}\sum_{i=1}^{n_{\test}}(f^*(\bx_i)-\widehat{f}(\bx_i))^2,   
\end{align}
where $f^*(\bx_i)$ denotes the true conditional mean function for the target model.

For TL-ANDI, the posterior penalty in the OT anchoring cost is selected over $\lambda\in \{0,0.01,0.05,\\0.1,0.2,0.5,1,2\}$, and the bandwidth $h$ is selected over the same bandwidth grid used for local kernel distillation. Specifically, we use the bandwidth grid $\mathcal{H}=\{q_{0.01},q_{0.05},q_{0.10},q_{0.20},q_{0.40}\}$, where $q_a$ denotes the $a$-quantile of pairwise Euclidean distances among standardized source covariates. Features are standardized before computing the squared Euclidean transport cost in \eqref{eq-ot-cost}, and the Epanechnikov kernel is adopted during data distillation. The target-only predictor is always included in the validation candidate set. The target dataset is partitioned into a calibration subset $\mathcal{N}^{(\textup{cal})}$ containing 50\% of target samples and a validation subset $\mathcal{N}^{(\textup{val})}$ containing 50\% of target samples.

We consider a feature dimension of $p = 10$, a target sample size of $n_{\tar} = 150$, a source sample size of $n_{\src} = 20000$, a test sample size of $n_{\test} = 1000$, and budget $n_{\max} \in \{500, 1000\}$. 

To evaluate the proposed method under varying degrees of domain discrepancy, we design four specific experimental settings over homogeneous source data and heterogeneous source data.

\subsubsection{Homogeneous Source} \label{sec:regression-homo}
In this subsection, we consider the settings with homogeneous source data.

For the source data $\mathcal{D}^{(\src)}$, the covariates are \textit{i.i.d.} sampled from $x_{i,j}^{(\src)} \sim \mathcal{U}(-1, 1)$. We consider the response model
$$y_i^{(\src)} = f^{(\src)}(\bx_i^{(\src)} ) + \epsilon_i^{(\src)},$$
where $\epsilon_i^{(\src)} \stackrel{i.i.d.}{\sim} \mathcal{N}(0, 1)$ are independent noise terms. We define the underlying true function $f^{(\src)}(\bx_i^{(\src)})$ as an additive model which contains linear, quadratic, and absolute value terms:
\begin{align}
\label{eq-src}
f^{(\src)}(\bx) = \sum_{j=1}^{10} \beta_{j} x_{j} + \sum_{j=1}^{10} \beta_{j+10} x_{j}^2 +  \sum_{j=1}^{10}\beta_{j+20} v_j(\bx)+\sum_{j=1}^{5} \beta_{j+30} |x_{j}| ,
\end{align}
where $\beta_j$, $1\leq j\leq 35$ are  \textit{i.i.d.} drawn from $\mathcal{U}(-1, 1)$ and $(v_1(\bx),\dots,v_{10}(\bx))=\{x_jx_k\}_{1\leq j < k\leq 5}$ denotes the interaction terms of the first 5 features.

For target data $\mathcal{D}^{(\tar)}$, covariates $\bx_i^{(\tar)}$ are drawn from the same distribution as $\bx_i^{(\src)}$. The corresponding response $y_i^{(\tar)}$ retains the standard normal additive noise structure, formulated as $y_i^{(\tar)} = f^{(\tar)}(\bx_i^{(\tar)}) + \epsilon_i^{(\tar)}$, where
\[
f^{(\tar)}(\bx) =f^{(\src)}(\bx) + \Delta(\bx).
\]
In the following, we evaluate the performance of different methods with different $\Delta(\cdot)$ functions.

\begin{itemize}
\item[(a)] 
\textbf{Linear shift.} We first consider linear $\Delta(\bx)$ such that
\[
  \Delta(\bx)=\bx^T\bm{c},
\]
where $c_j\sim \mathcal{U}(-1, 1)$ for $j=1,\dots,5$ and $c_j=0$ for $j\geq 6.$

\item[(b)]\textbf{Nonlinear shift.} Next, we consider a nonlinear posterior shift such that
$$
\Delta(\bx) = c_1 x_{1} + c_2 x_{2}^2 + c_3 \sin(2\pi x_{3}) + c_4 \max(0, x_{4}) + c_5 (x_{1} x_{2}),
$$
where the shift coefficients are sampled uniformly such that $c_j \sim \mathcal{U}(0.2, 0.8)$ for $j=1,\dots,5$.
    
\end{itemize}

The test dataset $\mathcal{D}^{(\test)}$ is generated following the identical data generating process as the target dataset $\mathcal{D}^{(\tar)}$ in settings (a) and (b), respectively.

We evaluate the overall predictive performance of the proposed TL-ANDI method against the Target-only and TL-RAND baselines in settings (a) and (b) and the results are reported in Table \ref{tab-homo-overall}. The performance of the Target-only method remains invariant to changes in the context budget $n_{\max}$, given that it relies exclusively on the fixed target dataset $\mathcal{D}^{(\tar)}$. Both TL-RAND and TL-ANDI report progressively lower MSE as $n_{\max}$ increases, demonstrating their capacity to effectively leverage source data.

In Table \ref{tab-homo-overall}, the Target-only baseline yields the highest MSE, whereas both TL-RAND and TL-ANDI successfully leverage the source data to achieve better performance. The performance gap between TL-RAND and TL-ANDI is modest under homogeneous source settings. This suggests that when the source distribution is homogeneous, even random source sampling can often transfer useful information, while TL-ANDI still provides additional gains through posterior-aware selection and label distillation. Consequently, the proposed TL-ANDI method consistently outperforms both the Target-only and TL-RAND baselines across all homogeneous source settings.

\begin{table}[H]
    \centering
    \caption{Comparison of Mean Squared Error (MSE) among the Target-only baseline, TL-RAND, and our proposed TL-ANDI method under homogeneous source settings. The table presents the mean with standard deviation of the MSE across setting (a) linear and setting (b) nonlinear posterior shifts with context budgets of $n_{\max} \in\{ 500,1000\}$. The results are summarized over 30 independent replications.}
    \label{tab-homo-overall}
    \begin{tabular}{l c c c c}
        \toprule
        \textbf{Setting} & \textbf{$n_{\max}$} & \textbf{Target-only} & \textbf{TL-RAND} & \textbf{TL-ANDI} \\
        \midrule
        \multirow{2}{*}{Linear Shift} 
        & 500  & 0.5917 $\pm$ 0.0982 & 0.2657 $\pm$ 0.0681 & \textbf{0.2225} $\pm$ 0.0575 \\
        & 1000 & 0.5917 $\pm$ 0.0982 & 0.1958 $\pm$ 0.0504 & \textbf{0.1760} $\pm$ 0.0398 \\
        \midrule
        \multirow{2}{*}{Nonlinear Shift} 
        & 500  & 0.7328 $\pm$ 0.2098 & 0.3533 $\pm$ 0.0734 & \textbf{0.3074} $\pm$ 0.0649 \\
        & 1000 & 0.7328 $\pm$ 0.2098 & 0.2823 $\pm$ 0.0654 & \textbf{0.2685} $\pm$ 0.0694 \\
        \bottomrule
    \end{tabular}
\end{table}

\subsubsection{Heterogeneous Source} \label{sec:regression-hetero}
In this subsection, we consider the settings with heterogeneous source data.  Specifically, for the source data $\mathcal{D}^{(\src)}$, we consider it to be an integration of two subpopulations. 
\begin{itemize}
    \item For $1\leq i\leq n_{\src}/2$,
 the source covariates are drawn from multivariate normal distributions $x_{i,j}^{(\src)} \stackrel{i.i.d.} {\sim} \mathcal{N}(0, 0.5^2)$, $1\leq j\leq 10$.
The response $y_i^{(\src)}$ are generated via (\ref{eq-src}) where $\beta_j$ are \textit{i.i.d.} generated from $\mathcal{U}(-1,1)$.
We denote the corresponding source model by $f^{(\src)}(\cdot)$.
\item For $n_{\src}/2 < i\leq n_{\src}$,  the source covariates are drawn from multivariate normal distributions $x_{i,j}^{(\src)} \stackrel{i.i.d.} {\sim} \mathcal{N}(1, 1)$, $1\leq j\leq 10$. 
The response $y_i^{(\src)}$ are generated via (\ref{eq-src}) where $\beta_j$ are \textit{i.i.d.} generated from $\mathcal{U}(-0.5,1.5)$. We denote the corresponding source model by $\widetilde{f}^{(\src)}(\cdot)$.
\end{itemize}
In the above, we consider a heterogeneous source where there exist both covariate and posterior shifts between the two components.

For the target data, we generate target domain covariates from $x_{i,j}^{(\tar)} \sim \mathcal{N}(0, 0.6^2).$ The target model is defined as $f^{(\tar)}(\bx) = f^{(\src)}(\bx) + \Delta(\bx)$, where $\Delta(\bx)$ is defined in settings (a) and (b), and $f^{(\src)}(\cdot)$ is the true conditional mean function for the first half of source data. We see that the target data has mild covariate shift and posterior shift from the first half of the source data but has larger distribution shift from the second half of source data. Therefore, the first half of source data are more informative than the second half and can be selected with higher probability.
The test dataset $\mathcal{D}^{(\test)}$ is generated following the identical data generating process as the target dataset $\mathcal{D}^{(\tar)}$ in settings (a) and (b), respectively.

The results are reported in Table \ref{tab-hetero-overall}. TL-RAND performs substantially worse than the Target-only baseline across both $n_{\max}=500$ and $n_{\max}=1000$ context budgets, demonstrating severe negative transfer. Expanding the context budget fails to meaningfully alleviate this performance degradation, highlighting the inherent risk of naive random sampling in heterogeneous source settings. In contrast, TL-ANDI consistently achieves the lowest MSE by strategically isolating informative source data from noninformative samples. Ultimately, these results highlight the advantage of TL-ANDI over both relying exclusively on limited target data and employing TL-RAND.

\begin{table}[ht]
    \centering
    \caption{Overall performance comparison of Mean Squared Error (MSE) among the Target-only baseline, TL-RAND, and our proposed TL-ANDI method under heterogeneous source settings. The table presents the mean with standard deviation of the MSE across setting (a) linear and setting (b) nonlinear posterior shifts with context budgets of $n_{\max} \in\{ 500,1000\}$. The results are summarized over 30 independent replications.}
    \label{tab-hetero-overall}
    \begin{tabular}{l c c c c}
        \toprule
        \textbf{Setting} & \textbf{$n_{\max}$} & \textbf{Target-only} & \textbf{TL-RAND} & \textbf{TL-ANDI} \\
        \midrule
        \multirow{2}{*}{Linear Shift} 
        & 500  & 0.9780 $\pm$ 0.2041 & 1.6995 $\pm$ 0.7053 & \textbf{0.6471} $\pm$ 0.2460 \\
        & 1000 & 0.9780 $\pm$ 0.2041 & 1.6740 $\pm$ 0.6956 & \textbf{0.5562} $\pm$ 0.2275 \\
        \midrule
        \multirow{2}{*}{Nonlinear Shift} 
        & 500  & 1.1297 $\pm$ 0.2534 & 1.8080 $\pm$ 0.6959 & \textbf{0.7452} $\pm$ 0.2574 \\
        & 1000 & 1.1297 $\pm$ 0.2534 & 1.7499 $\pm$ 0.6724 & \textbf{0.6540} $\pm$ 0.1891 \\
        \bottomrule
    \end{tabular}
\end{table}

\subsubsection{Ablation Study} 
\label{sec4-ablation}

In this subsection, we conduct ablation studies to evaluate the contributions of individual stages under both homogeneous and heterogeneous source settings. To explicitly evaluate the contributions of each stage, we compare the standard random baseline, TL-RAND; the random baseline augmented with our distillation process, TL-RAND w/ distillation; our proposed anchoring method without distillation, TL-ANDI w/o distillation; and our complete integrated pipeline, TL-ANDI. Results for the homogeneous and heterogeneous source settings are presented in Table \ref{tab:homo-ablation} and Table \ref{tab:hetero-ablation}, respectively.

We first examine the homogeneous source setting in Table~\ref{tab:homo-ablation}. 
Under the homogeneous source setting, TL-RAND with distillation achieves the best performance,  while TL-ANDI remains competitive compared with TL-RAND with distillation.
This is because TL-RAND performs better than TL-ANDI without distillation in this setting.  
In this homogeneous setting, the estimated posterior-discrepancy scores do not
provide strong additional information beyond random coverage, and the additional
selection step may introduce mild estimation noise.
As a result, TL-ANDI without distillation might perform slightly worse than TL-RAND in this homogeneous source case.  
The MSE reduction achieved by the complete TL-ANDI method is therefore largely attributable to the data distillation stage, as evidenced by the substantial gap in MSE between TL-ANDI without distillation and the complete TL-ANDI method.

Furthermore, we observe that the marginal improvement yielded by distillation is substantially more pronounced at $n_{\max} = 500$ than at $n_{\max} = 1000$ in Table~\ref{tab:homo-ablation}. This is because the limited context leaves the model sensitive to noise under a smaller sample budget. Thus, local kernel distillation leads to a more significant reduction in MSE. Conversely, the TabPFN model internally corrects a portion of the noise through its robust in-context learning capabilities when the budget expands to $n_{\max} = 1000$. As the remaining room for improvement shrinks, the additional variance reduction provided by distillation remains beneficial but becomes less pronounced.

\begin{table}[H]
\centering
\caption{Ablation study under homogeneous source conditions for setting (a) linear shift and setting (b) nonlinear shift. The table presents the mean with standard deviation of the MSE with context budgets of $n_{\max} \in\{ 500,1000\}$. The results are summarized over 30 independent replications. }
\label{tab:homo-ablation}
\small
\begin{tabular}{cclc}
\toprule
Shift & $n_{\max}$ & Method & MSE \\
\midrule
\multirow{8}{*}{Linear shift} 
& \multirow{4}{*}{500}  
& TL-RAND                  & $0.2657 \pm 0.0681$ \\
&& TL-RAND w/ distillation  & $\mathbf{0.2148 \pm 0.0506}$ \\
&& TL-ANDI w/o distillation & $0.2813 \pm 0.0707$ \\
&& TL-ANDI                  & $0.2225 \pm 0.0575$ \\
\cmidrule(lr){2-4}
& \multirow{4}{*}{1000} 
& TL-RAND                  & $0.1958 \pm 0.0504$ \\
&& TL-RAND w/ distillation  & $\mathbf{0.1736 \pm 0.0419}$ \\
&& TL-ANDI w/o distillation & $0.2031 \pm 0.0487$ \\
&& TL-ANDI                  & $0.1760 \pm 0.0398$ \\
\midrule
\multirow{8}{*}{Nonlinear shift} 
& \multirow{4}{*}{500}  
& TL-RAND                  & $0.3533 \pm 0.0734$ \\
&& TL-RAND w/ distillation  & $\mathbf{0.3021 \pm 0.0615}$ \\
&& TL-ANDI w/o distillation & $0.3798 \pm 0.0782$ \\
&& TL-ANDI                  & $0.3074 \pm 0.0649$ \\
\cmidrule(lr){2-4}
& \multirow{4}{*}{1000} 
& TL-RAND                  & $0.2823 \pm 0.0654$ \\
&& TL-RAND w/ distillation  & $\mathbf{0.2592 \pm 0.0613}$ \\
&& TL-ANDI w/o distillation & $0.3002 \pm 0.0691$ \\
&& TL-ANDI                  & $0.2685 \pm 0.0694$ \\
\bottomrule
\end{tabular}
\end{table}

% \begin{figure}[H]

%     \centering    
%     \includegraphics[width=\textwidth]{fig/figure1_homogeneous_ablation.pdf}
%     \caption{Ablation study under homogeneous source conditions for setting (a) linear shift (first row) and setting (b) nonlinear shift (second row). The boxplots compare Mean Squared Error (MSE) across four configurations: the random baseline TL-RAND, the distilled random baseline TL-RAND-DI, our proposed anchoring method TL-ANCHOR, and the complete pipeline TL-ANDI. The left and right columns correspond to context budgets of $n_{\max} = 500$ and $n_{\max} = 1000$, respectively. Results are summarized over 50 independent replications.}

%     \label{fig-homodistillation}

% \end{figure}

Next, we move on to the heterogeneous source setting in Table \ref{tab:hetero-ablation}. We observe a distinctly different pattern that OT anchoring plays the dominant role in MSE reduction. The massive performance gap between TL-RAND and TL-ANDI without distillation in the heterogeneous source setting is significantly more pronounced than in the homogeneous source setting. This is because naive TL-RAND suffers from severe negative transfer due to its random sampling mechanism. TL-ANDI without distillation actively prevents this through posterior-aware OT anchoring, dramatically reducing the MSE. The transport cost promotes coverage of the test covariate distribution while the posterior discrepancy penalty filters out noninformative source regions. Consequently, this targeted selection avoids the substantial negative transfer that typically plagues TL-RAND. Ultimately, the complete TL-ANDI pipeline further suppresses residual label noise through its distillation mechanism, enabling the combined approach to achieve the best overall predictive performance. Overall, however, the MSE reduction achieved by OT anchoring crucially overwhelms the contribution of data distillation in the heterogeneous source setting.

% \begin{figure}[H]

%     \centering    \includegraphics[width=\textwidth]{fig/figure2_heterogeneous_ablation.pdf}
%     \caption{Ablation study under heterogeneous source conditions for setting (a) linear shift (first row) and setting (b) nonlinear shift (second row). The boxplots compare Mean Squared Error (MSE) across four configurations: the random baseline TL-RAND, the distilled random baseline TL-RAND-DI, our proposed anchoring method TL-ANCHOR, and the complete pipeline TL-ANDI. The left and right columns correspond to context budgets of $n_{\max} = 500$ and $n_{\max} = 1000$, respectively. Results are summarized over 50 independent replications.}

%     \label{fig-heterodistillation}

% \end{figure}

\begin{table}[H]
\centering
\caption{Ablation study under heterogeneous source conditions for setting (a) linear shift and setting (b) nonlinear shift. The table presents the mean with standard deviation of the MSE with context budgets of $n_{\max} \in\{ 500,1000\}$. The results are summarized over 30 independent replications.}
\label{tab:hetero-ablation}
\small
\begin{tabular}{cclc}
\toprule
Shift & $n_{\max}$ & Method & MSE \\
\midrule
\multirow{8}{*}{Linear shift} 
& \multirow{4}{*}{500}  
& TL-RAND                  & $1.6995 \pm 0.7053$ \\
&& TL-RAND w/ distillation  & $1.4854 \pm 0.5657$ \\
&& TL-ANDI w/o distillation & $0.6896 \pm 0.2845$ \\
&& TL-ANDI                  & $\mathbf{0.6471 \pm 0.2460}$ \\
\cmidrule(lr){2-4}
& \multirow{4}{*}{1000} 
& TL-RAND                  & $1.6740 \pm 0.6956$ \\
&& TL-RAND w/ distillation  & $1.4534 \pm 0.6004$ \\
&& TL-ANDI w/o distillation & $0.5966 \pm 0.2669$ \\
&& TL-ANDI                  & $\mathbf{0.5562 \pm 0.2275}$ \\
\midrule
\multirow{8}{*}{Nonlinear shift} 
& \multirow{4}{*}{500}  
& TL-RAND                  & $1.8080 \pm 0.6959$ \\
&& TL-RAND w/ distillation  & $1.5886 \pm 0.5653$ \\
&& TL-ANDI w/o distillation & $0.8036 \pm 0.2800$ \\
&& TL-ANDI                  & $\mathbf{0.7452 \pm 0.2574}$ \\
\cmidrule(lr){2-4}
& \multirow{4}{*}{1000} 
& TL-RAND                  & $1.7499 \pm 0.6724$ \\
&& TL-RAND w/ distillation  & $1.5707 \pm 0.6075$ \\
&& TL-ANDI w/o distillation & $0.6743 \pm 0.2123$ \\
&& TL-ANDI                  & $\mathbf{0.6540 \pm 0.1891}$ \\
\bottomrule
\end{tabular}
\end{table}

\subsection{Classification Simulation} 
\label{sec4-classification}

We further evaluate TL-ANDI in a binary classification setting. We benchmark our proposed TL-ANDI method against three baselines: the Target-only method, the TL-RAND baseline, and the TabPFN-kNN baseline proposed in \cite{thomas2024retrieval}. The TabPFN-kNN baseline operates during the prediction phase by independently retrieving the $k$-nearest source samples as a unique context for each test sample, relying strictly on covariate distances. In this simulation experiment, we fix $k=300$.  Similar to the heterogeneous regression simulation, we consider a heterogeneous source distribution consisting of two source components with identical size. We set $p=10$, $n_{\src}=20000$, $n_{\tar}=150$, and $n_{\test}=1000$. 

Specifically, the data are generated as follows:
\begin{itemize}
    \item For $1\leq i\leq n_{\src}/2$, the source covariates are drawn from
          \[
              x_{i,j}^{(\src)} \stackrel{i.i.d.} {\sim} \mathcal{N}(0, 0.6^2), 1\leq j\leq 10.
          \]
          The corresponding logit model is
          \[
              g^{(\src)}(\bx)
=
\sum_{j=1}^{10}\beta_j x_j
+\sum_{j=1}^{10}\beta_{j+10}x_j^2
+\sum_{j=1}^{10}\beta_{j+20}v_j(\bx)
+\sum_{j=1}^{5}\beta_{j+30}|x_j|,
          \]
          where $\beta_j$, $1\leq j\leq 35$ are  \textit{i.i.d.} drawn from $\mathcal{U}(-1, 1)$ and $(v_1(\bx),\dots,v_{10}(\bx))=\{x_jx_k\}_{1\leq j < k\leq 5}$ denotes the interaction terms of the first 5 features.
          The binary response is generated by
          \[
              y_i^{(\src)}\mid \bx_i^{(\src)}
              \sim
              \mathrm{Bernoulli}
              \left[
              \sigma\left\{
              g^{(\src)}(\bx_i^{(\src)})
              \right\}
              \right],
          \]
          where $\sigma(z)=1/(1+\exp(-z))$ is the logistic link function.

    \item For $n_{\src}/2<i\leq n_{\src}$, the source covariates are also drawn from
          \[
              x_{i,j}^{(\src)} \stackrel{i.i.d.} {\sim} \mathcal{N}(1, 0.6^2), 1\leq j\leq 10.
          \]
          The corresponding logit model is \(\widetilde g^{(\src)}(\bx)\), defined as in \(g^{(\src)}(\bx)\) but with coefficients \(\widetilde\beta_j\) \textit{i.i.d.} drawn from $\mathcal{U}(-0.5, 1.5)$ .
\end{itemize}

The target and test covariates are generated from the same distribution,
\[
    x_{i,j}^{(\tar)},x_{i,j}^{(\test)}\sim \mathcal N(0,0.6^2),
\]
and their responses follow the posterior model:

 \[
    g^{(\tar)}(\bx)
=0.75g^{(\src)}(\bx).
\]

Thus,
\[
    \begin{aligned}
        y_i^{(\tar)}\mid \bx_i^{(\tar)}
        &\sim
        \mathrm{Bernoulli}
        \left[
        \sigma\left\{
        g^{(\tar)}(\bx_i^{(\tar)})
        \right\}
        \right], \\
        y_i^{(\test)}\mid \bx_i^{(\test)}
        &\sim
        \mathrm{Bernoulli}
        \left[
        \sigma\left\{
        g^{(\tar)}(\bx_i^{(\test)})
        \right\}
        \right].
    \end{aligned}
\]

For each setting, we repeat the experiment over 30 random seeds. 
The evaluation metric is the mis-classification error (MCE). For the Target-only, TL-RAND, and TL-ANDI methods,  the raw output provides a continuous estimate of the probability of class 1. Hyper-parameters are
set identically to those used in the regression studies. We apply a standard classification threshold of 0.5 to determine the final binary prediction.

Table~\ref{tab:classification-mce} reports the MCE over 30 random seeds. Target-only has the largest classification error and also exhibits relatively high variability, showing that the target sample size alone is not sufficient in this difficult nonlinear classification problem. The three transfer-based methods all improve over Target-only, confirming that source data are informative when incorporated properly. However, because TabPFN-kNN retrieves source samples mainly based on covariate proximity,  it is not sufficient to fully resolve the posterior heterogeneity.

Overall, TL-ANDI achieves the best performance in the classification simulation experiment. More importantly, the gain over TabPFN-kNN indicates that posterior-aware source selection is beneficial in heterogeneous classification problems where misleading source samples overlap with transferable samples in covariate space.

\begin{table}[H]
    \centering
    \caption{Mean mis-classification error of the Target-only baseline, TabPFN-kNN baseline, TL-RAND, and
    our proposed TL-ANDI method under the binary classification simulation experiment. The table presents the mean with standard deviation of the MCE with context budgets of $n_{\max} \in \{ 500,1000\}$. 
    The results are summarized over 30 independent replications.}
    \label{tab:classification-mce}
    \small
    \begin{tabular}{ccccc}
        \toprule
        $n_{\max}$ & Target-only & TabPFN-kNN & TL-RAND & TL-ANDI \\
        \midrule
        500  & $0.2652 \pm 0.0431$ & $0.2411 \pm 0.0158$ & $0.1898 \pm 0.0227$ & $\mathbf{0.1520 \pm 0.0154}$ \\
        1000 & $0.2652 \pm 0.0431$ & $0.2411 \pm 0.0158$ & $0.1606 \pm 0.0173$ & $\mathbf{0.1390 \pm 0.0157}$ \\
        \bottomrule
    \end{tabular}
\end{table}

\section{Real Data Experiments}
\label{sec5}
In  this section, we evaluate the performance of our proposed transfer learning algorithm in two real datasets considering prediction and classification, respectively.

\subsection{Application to California Housing Price Dataset}

The California Housing Dataset (\url{https://www.kaggle.com/datasets/camnugent/california-housing-prices/data#}) contains aggregate statistics for houses within various California districts based on 1990 census data. The dataset comprises 20,640 observations, with the median house value as the response variable and 9 features. In this analysis, we evaluate and compare the performance of three methods: the Target-only method, TL-RAND, and our proposed TL-ANDI method.

During data pre-processing, we remove all records containing missing values. We also observe that the median house value and housing median age variables are artificially capped at 500,000 and 52, respectively. We exclude these capped observations from our dataset.

To evaluate the robustness and replicability of our method, we categorize the districts based on the ocean proximity feature: ``$<$1H OCEAN'', ``NEAR BAY'', ``INLAND'', and ``NEAR OCEAN''. 
While these categories likely share structural similarities, their underlying predictive models for house values may vary, making them suitable scenarios for transfer learning.
We exclude the ``ISLAND'' category due to its insufficient sample size, whereas the remaining four categories each contain over 2000 observations.
We employ a leave-one-out strategy for our experiments. Specifically, one category is selected as the target distribution, while the data from the remaining three categories are aggregated to form the source pool in each iteration. From the source pool, we randomly sample $n_{\src}=5000$ observations without replacement as the source data. From the selected target category, we randomly sample $n_{\tar}=100$ observations as the target training data and another $n_{\test}=1000$ observations as the test data. The context budget is fixed at $n_{\max}=1000$. This data-splitting process is repeated 30 times to ensure statistical significance.
We define the median house value as the response variable and utilize the remaining 6 features as covariates, excluding longitude and latitude to focus strictly on non-spatial attributes. All features are standardized prior to analysis, and the hyper-parameters are set identically to those used in the simulation studies.

In this experiment, performance is evaluated using the Mean Prediction Error (MPE) on the test data $\{\bx_i, y_i\}_{i=1}^{n_{\test}}$:\begin{equation}\text{MPE}(\widehat{f}) = \frac{1}{n_{\test}} \sum_{i=1}^{n_{\test}} (y_i - \widehat{f}(\bx_i))^2,\end{equation}where $\widehat{f}(\cdot)$ denotes the prediction rule derived from each method.

\begin{figure}
    \centering
    \includegraphics[width=0.9\linewidth]{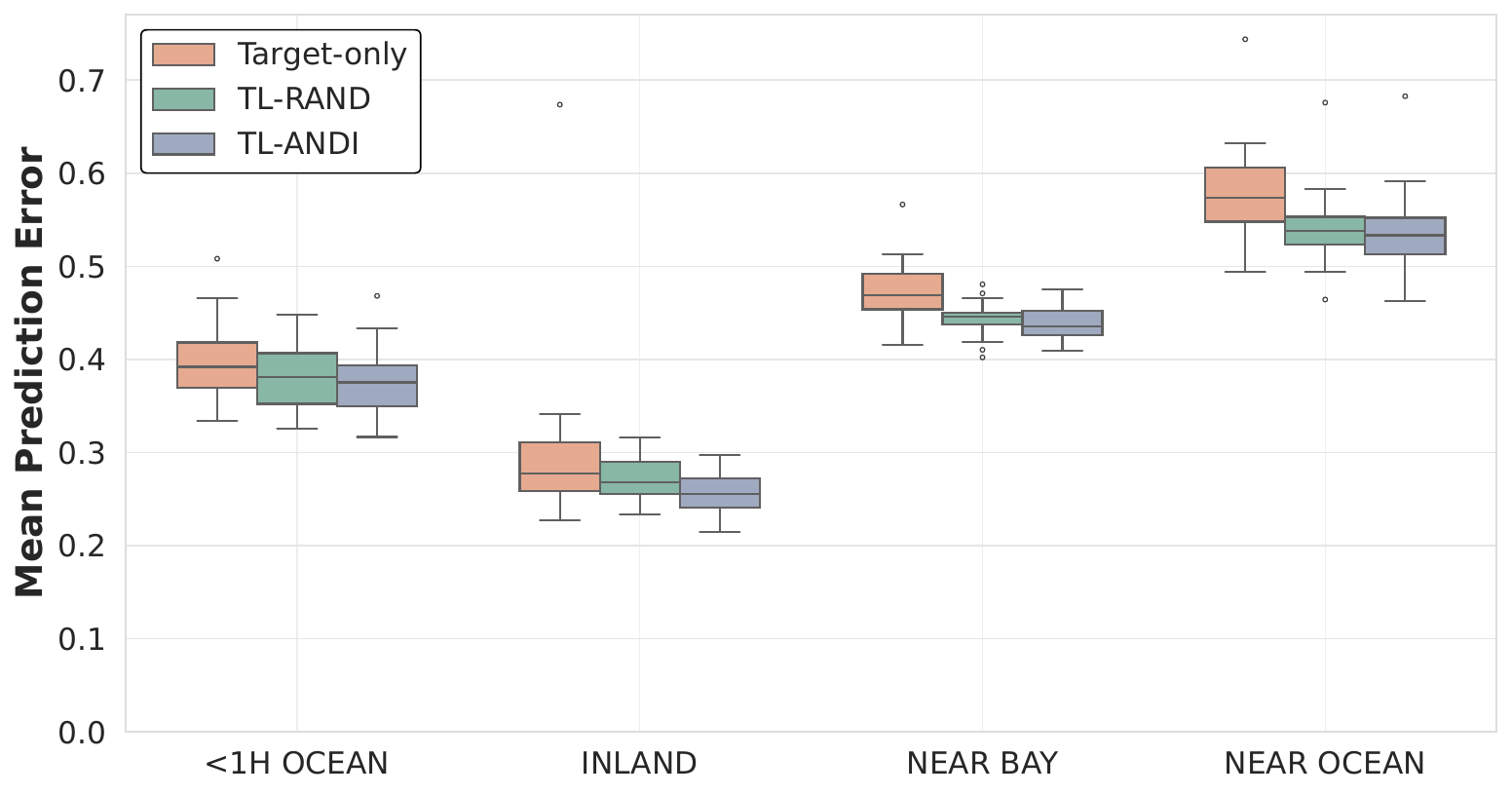}
    \caption{Mean Prediction Error (MPE) of Target-only baseline,
    TL-RAND, and our proposed TL-ANDI method across four target domains defined by ocean proximity. The results for each method are summarized over 30 independent replications.}
    \label{fig-housepricing}
\end{figure}

Figure~\ref{fig-housepricing} reports the Mean Prediction Error (MPE) of the Target-only method, TL-RAND, and TL-ANDI for predicting the median house value across four target categories. Overall, TL-ANDI achieves the lowest prediction error in all four experiments, demonstrating its robustness across different target distributions. Compared with the Target-only baseline, both transfer learning methods generally benefit from incorporating source information, indicating that the source categories contain useful predictive information for the target task.
However, since TL-RAND randomly selects source samples from the aggregated heterogeneous source pool, it may include samples that are less aligned with the target distribution, which weakens the transfer effect. In contrast, TL-ANDI consistently improves upon TL-RAND by selecting and distilling more informative source samples. This advantage is particularly visible for the ``INLAND'', ``NEAR BAY'', and ``NEAR OCEAN'' target categories, where TL-ANDI further reduces the prediction error relative to the naive random transfer baseline.

Overall, the proposed TL-ANDI method consistently achieves the lowest mean prediction errors across all ocean proximity categories. This empirical evidence validates the practical effectiveness and robustness of our proposed framework in real-world scenarios.

\subsection{Application to Diabetes Detection}

We consider a classification task in this subsection.
The Diabetes Health Indicators Dataset (\url{https://www.kaggle.com/datasets/alexteboul/diabetes-health-indicators-dataset}) contains 70,692 survey responses from the CDC's 2015 Behavioral Risk Factor Surveillance System (BRFSS), a health-related telephone survey. The dataset is perfectly balanced, featuring a 50-50 split between respondents with no diabetes and those with either prediabetes or diabetes. It includes 21 feature variables detailing health-related risk behaviors, chronic health conditions, and the utilization of preventative services. In our analysis, we benchmark our proposed TL-ANDI method against three baselines: the Target-only method, the TL-RAND baseline, and TabPFN-kNN.

To evaluate our method's performance across heterogeneous data sources, we sequentially treat samples grouped by age and sex as the target domain. Given that the relationship between health risk factors and diabetes prevalence often exhibits heterogeneity across distinct demographic profiles, partitioning the dataset along the age and sex dimensions naturally induces distribution shifts, establishing a suitable transfer learning scenario. The original dataset is split into 13 groups by age. We aggregate them into three broader age clusters: young (groups 1-4), middle-aged (groups 5-8), and old (groups 9-13). By interacting these three age clusters with the binary sex variable, we construct 6 distinct demographic groups (i.e., Young female, Young male, Middle-aged female, Middle-aged male, Old female, Old male). The sample sizes across these six groups range from a minimum of 3315 to a maximum of 21418.

The response variable takes a value of 0 for individuals with no diabetes and 1 for those with prediabetes or diabetes. After defining our target and source domains based on age and sex, we select 10 key features from the remaining 19 variables to serve as covariates. For the Target-only, TL-RAND, and TL-ANDI methods,  the raw output provides a continuous estimate of the probability of having diabetes. We apply a standard classification threshold of 0.5 to determine the final binary prediction. All models are evaluated and compared using mean mis-classification error.

In our experimental setup, one demographic group is isolated as the target and test domain, while all data from the other five categories are aggregated to serve as the source data. We randomly sample $n_{\src}=20000$ observations without replacement as the
source data. We fix the budget $n_{\max}=1000$, the target size at $n_{\tar}=100$, and the test sample size at $n_{\test}=1000$. To ensure the robustness of our evaluation, this random sample splitting process is repeated 30 times. All continuous features are standardized prior to model training, and the hyper-parameters are set identically to those utilized in the simulation studies.

\begin{figure}[H]
    \centering
    \includegraphics[width=0.9\linewidth]{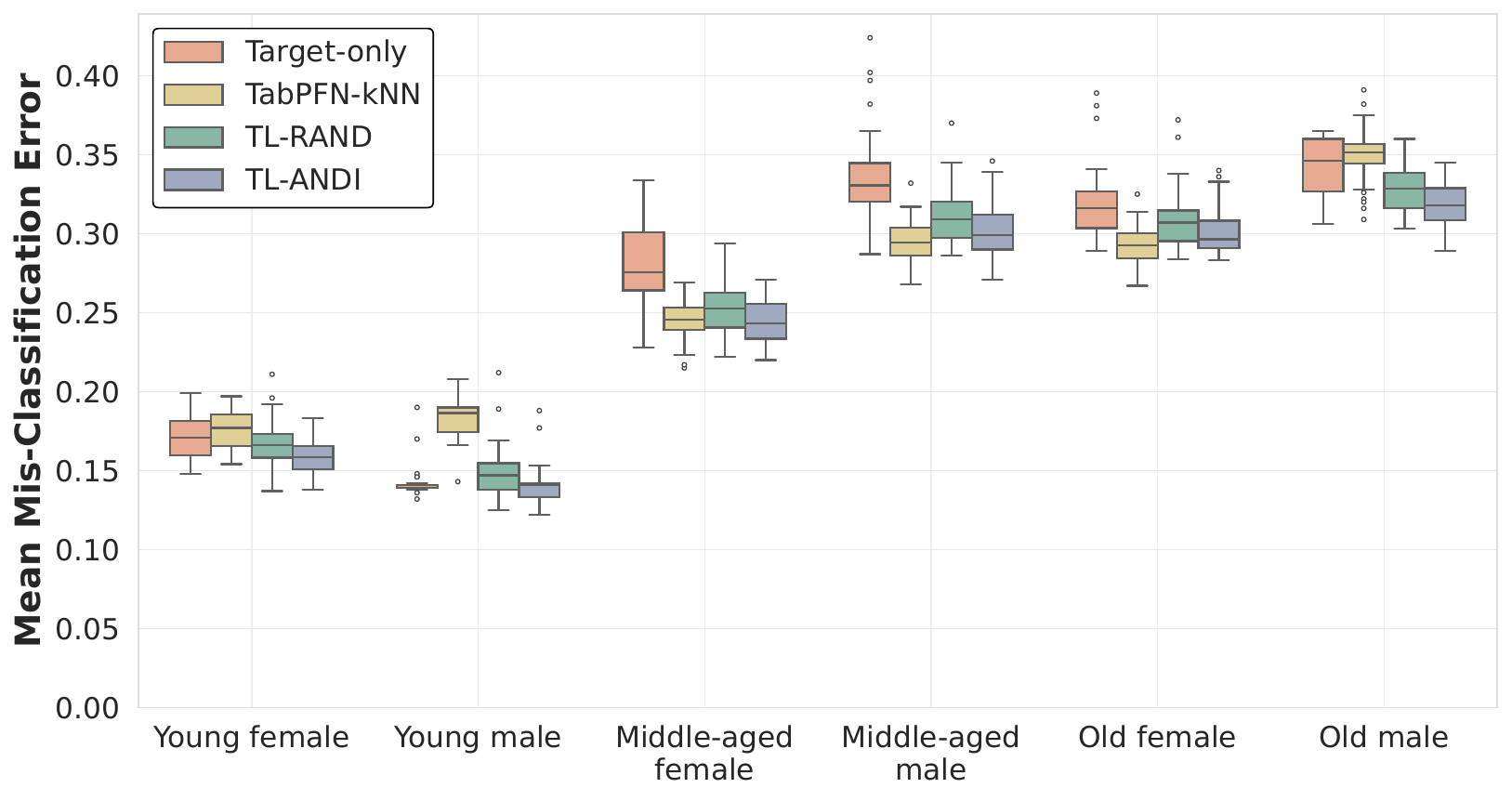}
    \caption{Mean mis-classification error of Target-only baseline, TabPFN-kNN baseline, TL-RAND, and our proposed TL-ANDI method across six demographic target domains in Diabetes Health Indicators Dataset. The results for each method are summarized over 30 independent replications.}
    \label{fig-diabetes}
\end{figure}

Figure \ref{fig-diabetes} reports the average mis-classification errors of the Target-only method, the TabPFN-kNN baseline, the TL-RAND baseline, and our proposed TL-ANDI method across six distinct demographic target domains. Overall, TL-ANDI demonstrates consistently stable and highly competitive or superior performance across all evaluated scenarios. Our proposal also outperforms TabPFN-kNN in many scenarios, especially in the ``Young female'', ``Young male'' and ``Old male'' groups.

In the ``Young female'' and ``Young male'' groups, the Target-only method demonstrates relatively small classification errors in comparison to naive transfer learning methods. TabPFN-kNN performs slightly worse in both groups, which may be due to the existence of a large distribution shift from the young-aged group to older groups so that little information can be transferred by covariate-based retrieval. Our proposal remains robust due to its careful selection of anchor points, thereby avoiding negative transfer.

In the middle-aged groups and ``Old female'' group, TabPFN-kNN and TL-ANDI show clear improvement. This potentially indicates that the posterior shift in these groups is mild, meaning that adjusting exclusively for the covariate shift is sufficient for successful knowledge transfer. 
In the ``Old male'' group, both TL-RAND and TL-ANDI exhibit significant performance gains through positive transfer. By carefully selecting the most informative source samples, TL-ANDI maximizes this benefit and achieves the lowest error among the four methods.

In summary, while the Target-only and TabPFN-kNN methods can achieve strong results in specific cases, their success is highly inconsistent. In comparison, TL-ANDI delivers robust and highly competitive performance across all demographic scenarios, and achieves the best or near-best error in most groups. This again demonstrates the robustness and reliability of our proposed method.

\section{Discussion}
\label{sec6}
This work introduces TL-ANDI, a framework designed to overcome the data integration bottlenecks inherent in current Tabular Foundation Models (TFMs). By formulating source-context construction as a posterior-aware
budget-constrained transport-based selection problem, TL-ANDI selects a compact
set of anchors that simultaneously covers the test covariate distribution and
avoids source regions with large posterior mismatch. Local kernel distillation then converts these anchors into low-variance pseudo-samples, and residual calibration adapts the source prediction rule to the target task. 
% Theoretical analysis shows that the anchoring objective admits a greedy approximation guarantee, that the distilled context approximates the target regression graph under transparent transferability conditions, and that validation selection provides a no-negative-transfer safeguard.
Theoretical analysis shows that the selected and distilled source context can be target-compatible under proper conditions, while the validation step
provides a no-negative-transfer safeguard by always comparing against the target-only predictor.

Despite these advantages, the computational cost of data-driven tuning and greedy OT selection remains a hurdle for real-time deployment, suggesting a need for more scalable approximations. Furthermore, as kernel-based methods are sensitive to the curse of dimensionality, future research should explore the integration of dimension reduction or sparse feature selection within the TL-ANDI framework. Developing high-dimensional transfer learning methods that maintain the efficiency of parameter-free, input-space optimization remains a vital direction for expanding the utility of TFMs in complex data environments.

% Bibliography using author-year style
\bibliographystyle{plainnat} % Author-year style
\bibliography{references} % Your BibTeX files

\appendix
\section*{Appendix}
\section{Proofs}

% \begin{proof}[Proof of Proposition \ref{prop-submodular}]
% For each test point $j$, define
% \[
%     u_j(S)=C_0-\min\left\{C_0,\min_{i\in S}\widehat c_{ji}^{(\lambda,h)}\right\}.
% \]
% Then $\widehat{\mathcal U}_{\lambda,h}(S)=n_{\test}^{-1}\sum_{j=1}^{n_{\test}}u_j(S)$. Clearly $u_j(\emptyset)=0$ and $u_j(S)\geq0$. If $A\subseteq B$, then the closest selected cost under $B$ is no larger than that under $A$, and hence $u_j(A)\leq u_j(B)$; therefore $u_j$ is monotone.

% To verify submodularity, let $A\subseteq B$ and $i\notin B$. Write $d_j(S)=\min\{C_0,\min_{r\in S}\widehat c_{jr}^{(\lambda,h)}\}$. Since $A\subseteq B$, $d_j(B)\leq d_j(A)$. The marginal gain of adding $i$ to $S$ is
% \[
%     u_j(S\cup\{i\})-u_j(S)=d_j(S)-\min\{d_j(S),\widehat c_{ji}^{(\lambda,h)}\}.
% \]
% The map $d\mapsto d-\min\{d,c\}$ is nondecreasing in $d$ for every fixed $c$. Therefore,
% \[
%     u_j(A\cup\{i\})-u_j(A)\geq u_j(B\cup\{i\})-u_j(B),
% \]
% which proves submodularity of $u_j$. A nonnegative weighted sum of monotone submodular functions is still monotone and submodular, so $\widehat{\mathcal U}_{\lambda,h}$ is monotone submodular. The $(1-e^{-1})$ guarantee follows from the classical greedy approximation theorem for maximizing a nonnegative monotone submodular function under a cardinality constraint \citep{nemhauser1978analysis}.
% \end{proof}

\begin{proof}[Proof of Lemma \ref{lem-distill}]
For any source point $\bx_i^{(\src)}$, by \eqref{eq-pilot-src},
\begin{align*}
  \widetilde y_i^{(h)}-f^{(\src)}(\bx_i^{(\src)})
  &=\frac{\sum_{r=1}^{n_{\src}}K_h(\bx_r^{(\src)}-\bx_i^{(\src)})\{f^{(\src)}(\bx_r^{(\src)})-f^{(\src)}(\bx_i^{(\src)})\}}{\sum_{r=1}^{n_{\src}}K_h(\bx_r^{(\src)}-\bx_i^{(\src)})} \\
  &\quad +\frac{\sum_{r=1}^{n_{\src}}K_h(\bx_r^{(\src)}-\bx_i^{(\src)})\epsilon_r^{(\src)}}{\sum_{r=1}^{n_{\src}}K_h(\bx_r^{(\src)}-\bx_i^{(\src)})} \\
  &:=B_i+V_i.
\end{align*}
Since $K$ is compactly supported, the numerator of $B_i$ only involves points within a distance of order $h$ from $\bx_i^{(\src)}$. By Assumption \ref{ass2}, uniformly over $i$,
\[
    |B_i|\leq C\theta_1 h^{\theta_2}.
\]
By Assumptions \ref{ass1} and \ref{ass3}, standard kernel concentration gives
\[
    \inf_{1\leq i\leq n_{\src}}\sum_{r=1}^{n_{\src}}K_h(\bx_r^{(\src)}-\bx_i^{(\src)})\geq C n_{\src}
\]
with probability at least $1-O(n_{\src}^{-c})$ when $n_{\src}h^p\gtrsim\log n_{\src}$. Conditional on the covariates, $V_i$ is sub-Gaussian with variance proxy bounded by $C/(n_{\src}h^p)$. 

We next bound the stochastic term uniformly over all possible source anchors.
Let \(n=n_{\mathrm{src}}\), \(X_i=\bx_i^{(\mathrm{src})}\), and
\[
    D_i=\sum_{r=1}^{n_{\src}} K_h(X_r-X_i).
\]
For \(i=1,\ldots,n\), define
\[
    w_{ir}
    =
    \frac{K_h(\bx^{(\src)}_r-\bx^{(\src)}_i)}{D_i}.
\]
Then
\[
V_i=\sum_{r=1}^{n_{\src}}w_{ir}\epsilon_r^{(\mathrm{src})}.
\]
We first establish a uniform lower bound for \(D_i\). Under Assumptions 1 and 3, and for the usual kernels used in this paper, there exist constants \(c_D>0\) and \(c>0\) such that
\[
    \mathbb{P}
    \left(
        \min_{1\leq i\leq n_{\src}}D_i \geq c_D n_{\src}
    \right)
    \geq
    1-2n_{\src}\exp(-cn_{\src}h^p).
\]
Indeed, conditional on \(\bx_i^{(\src)}=x\),
\[
    \mathbb{E}
    \left[
        K_h(\bx^{(\src)}_r-\bx)
    \right]
    =
    \int K_h(\bm{u}-\bx)p_X^{(\mathrm{src})}(\bm{u})\,d\bm{u}
    =
    \int K(v)p_X^{(\mathrm{src})}(\bx+h\bm{v})\,d\bm{v},
\]
which is bounded below by a positive constant uniformly over \(x\in[0,1]^p\), up to boundary constants, because \(p_X^{(\mathrm{src})}\) is bounded away from zero on its support and \(K\) has positive mass around the origin. Since \(K_h(\bx^{(\src)}_r-\bx)\leq \|K\|_{\infty}h^{-p}\), Bernstein's inequality gives
\[
    \mathbb{P}
    \left(
        D_i<c_D n_{\src}
        \mid \bx^{(\src)}_i
    \right)
    \leq
    2\exp(-cn_{\src}h^p).
\]
Taking a union bound over \(i=1,\ldots,n_{\src}\) gives the desired uniform lower bound.

On the event
\[
    \mathcal{E}_X
    =
    \left\{
        \min_{1\leq i\leq n}D_i \geq c_D n_{\src}
    \right\},
\]
we have a uniform upper bound for the squared weights. Since \(K\) is nonnegative and bounded,
\[
    K_h^2(\bm{u})
    =
    h^{-2p}K^2(\bm{u}/h)
    \leq
    \|K\|_{\infty} h^{-p} K_h(\bm{u}).
\]
Therefore, for every \(i=1,\ldots,n_{\src}\),
\[
    \sum_{r=1}^{n_{\src}}w_{ir}^2
    =
    \frac{\sum_{r=1}^{n_{\src}}K_h^2(\bx^{(\src)}_r-\bx^{(\src)}_i)}{D_i^2}
    \leq
    \frac{\|K\|_{\infty}h^{-p}\sum_{r=1}^{n_{\src}}K_h(\bx^{(\src)}_r-\bx^{(\src)}_i)}{D_i^2}
    =
    \frac{\|K\|_{\infty}h^{-p}}{D_i}
    \leq
    \frac{C}{n_{\src}h^p}.
\]

Conditional on $X^{(\src)}$, \(V_i\) is a weighted sum of independent sub-Gaussian errors. Hence, on \(\mathcal{E}_X\), for every fixed \(i\),
\[
    \mathbb{P}
    \left(
        |V_i|>t
        \mid X^{(\src)}
    \right)
    \leq
    2\exp
    \left(
        -\frac{ct^2}
        {\sigma_{\max}^2\sum_{r=1}^{n_{\src}}w_{ir}^2}
    \right)
    \leq
    2\exp
    \left(
        -\frac{cnh^pt^2}{\sigma_{\max}^2}
    \right).
\]
Applying another union bound over \(i=1,\ldots,n_{\src}\), we obtain
\[
\begin{aligned}
    \mathbb{P}
    \left(
        \max_{1\leq i\leq n_{\src}}|V_i|>t
        \mid X^{(\src)}
    \right)
    &\leq
    \sum_{i=1}^{n}
    \mathbb{P}
    \left(
        |V_i|>t
        \mid X^{(\src)}
    \right)  \\
    &\leq
    2n_{\src}\exp
    \left(
        -\frac{cn_{\src}h^pt^2}{\sigma_{\max}^2}
    \right).
\end{aligned}
\]
Taking
\[
    t=A\sigma_{\max}
    \sqrt{\frac{\log n_{\src}}{n_{\src}h^p}}
\]
with \(A>0\) sufficiently large gives
\[
    2n_{\src}\exp
    \left(
        -\frac{cn_{\src}h^pt^2}{\sigma_{\max}^2}
    \right)
    =
    2n_{\src}^{1-cA^2}
    \leq
    2n_{\src}^{-a}
\]
for some constant \(a>0\). Combining this bound with the probability of \(\mathcal{E}_X\), and using \(n_{\src}h^p\gtrsim \log n_{\src}\), we conclude that
\[
    \max_{1\leq i\leq n_{\mathrm{src}}}|V_i|
    \leq
    C\sigma_{\max}
    \sqrt{
        \frac{\log n_{\mathrm{src}}}
        {n_{\mathrm{src}}h^p}
    }
\]
with probability at least \(1-Cn_{\mathrm{src}}^{-a}\).
 Combining the bias and stochastic terms proves the claim.
\end{proof}

\begin{proof}[Proof of Theorem \ref{thm-graph}]
Let $e_{\src}=r_{\src}(h)$ and $e_{\tar}=r_{\tar}$. On the high-probability event in Lemma \ref{lem-distill} and Assumption \ref{ass-target},
\[
    |\widehat\Delta_i^{(h)}-\Delta(\bx_i^{(\src)})|
    \leq e_{\src}+e_{\tar},\qquad i=1,\ldots,n_{\src}.
\]
Using $(a+b)^2\leq2a^2+2b^2$, for every $S$,
\[
    \mathcal T_{\lambda}(S)
    \leq 2\widehat{\mathcal T}_{\lambda,h}(S)+C_\lambda(e_{\src}^2+e_{\tar}^2),
    \qquad
    \widehat{\mathcal T}_{\lambda,h}(S)
    \leq 2\mathcal T_{\lambda}(S)+C_\lambda(e_{\src}^2+e_{\tar}^2).
\]
As $\widehat S_{\lambda,h}$ satisfies the empirical optimization inequality,
\begin{align*}
    \mathcal T_\lambda(\widehat S_{\lambda,h})
    &\leq 2\widehat{\mathcal T}_{\lambda,h}(\widehat S_{\lambda,h})+C_\lambda(e_{\src}^2+e_{\tar}^2)\\
    &\leq 2\inf_{|S|\leq m}\widehat{\mathcal T}_{\lambda,h}(S)+2\eta_m+C_\lambda(e_{\src}^2+e_{\tar}^2)\\
    &\leq 4\mathcal T_{\lambda,m}^{\star}+C_\lambda(e_{\src}^2+e_{\tar}^2)+2\eta_m.
\end{align*}
It remains to relate $\mathfrak D_{\tar}$ to $\mathcal T_\lambda$. For any pair $(j,i)$,
\begin{align*}
    \left|\widetilde y_i^{(h)}-f^{(\tar)}(\bx_j^{(\test)})\right|
    &\leq \left|\widetilde y_i^{(h)}-f^{(\src)}(\bx_i^{(\src)})\right|
    +|\Delta(\bx_i^{(\src)})|\\
    &\quad +\left|f^{(\tar)}(\bx_i^{(\src)})-f^{(\tar)}(\bx_j^{(\test)})\right|\\
    &\leq e_{\src}+|\Delta(\bx_i^{(\src)})|+L\|\bx_i^{(\src)}-\bx_j^{(\test)}\|_2.
\end{align*}
Thus,
\[
    \|\bx_j^{(\test)}-\bx_i^{(\src)}\|_2^2+\bigl\{\widetilde y_i^{(h)}-f^{(\tar)}(\bx_j^{(\test)})\bigr\}^2
    \leq C_{\lambda,L}\left[\|\bx_j^{(\test)}-\bx_i^{(\src)}\|_2^2+\lambda\Delta^2(\bx_i^{(\src)})+e_{\src}^2\right].
\]
Taking the minimum over $i\in\widehat S_{\lambda,h}$ and then averaging over $j$ gives
\[
    \mathfrak D_{\tar}(\widehat S_{\lambda,h},h)
    \leq C_{\lambda,L}\{\mathcal T_\lambda(\widehat S_{\lambda,h})+e_{\src}^2\}.
\]
Combining this with the previous display proves the result.
\end{proof}

\begin{proof}[Proof of Theorem \ref{thm-no-negative}]
Let
\[
  \widehat{R}_{\mathrm{val}}(f)
  =
  \frac{1}{n_{\mathrm{val}}}
  \sum_{i\in\mathcal{N}^{(\mathrm{val})}}
  \left(y_i^{(\tar)}-f(\bx_i^{(\tar)})\right)^2 .
\]
Conditional on the calibration data and source data, the candidate predictors are fixed functions of the validation covariates. By Hoeffding's inequality and a union bound over $M+1$ candidates, with probability at least $1-\delta$,
\[
    \max_{0\leq k\leq M}|\widehat R_{\textup{val}}(\widehat f_k)-R(\widehat f_k)|
    \leq B_{n}\sqrt{\frac{\log\{2(M+1)/\delta\}}{2n_{\textup{val}}}},
\]
where $\widehat R_{\textup{val}}$ denotes the empirical validation risk. Let $k^*\in\argmin_{0\leq k\leq M}R(\widehat f_k)$. Since $\widehat f^{(\mathrm{sel})}$ minimizes the validation risk,
\begin{align*}
    R(\widehat f^{(\mathrm{sel})})
    &\leq \widehat R_{\textup{val}}(\widehat f^{(\mathrm{sel})})+B_{n}\sqrt{\frac{\log\{2(M+1)/\delta\}}{2n_{\textup{val}}}}\\
    &\leq \widehat R_{\textup{val}}(\widehat f_{k^*})+B_{n}\sqrt{\frac{\log\{2(M+1)/\delta\}}{2n_{\textup{val}}}}\\
    &\leq R(\widehat f_{k^*})+2B_{n}\sqrt{\frac{\log\{2(M+1)/\delta\}}{2n_{\textup{val}}}},
\end{align*}
which proves the oracle inequality. The no-negative-transfer bound follows by choosing $k^*=0$ or by noting that the target-only predictor is included in the candidate set.
\end{proof}

\section{Further Simulation Results}
\label{app-fur-ex}

In this section, we provide additional robustness and sensitivity simulation experiments. 
Specifically, Appendix~\ref{app:final-refit} further examines the refitting step after operating TL-ANDI. 
Appendix~\ref{app-al1-targetonly-sensitivity} compares Vanilla-TL with the Target-only baseline under different sample-size regimes.
Appendix~\ref{app-muoverlap} studies the sensitivity of TL-ANDI to the covariate overlap between heterogeneous source components. 
Appendix~\ref{app-tarsize} investigates the effect of the target sample size on the performance of TL-ANDI. 

\subsection{Effect of Final Refitting}
\label{app:final-refit}

In this subsection, we examine the practical effect of the final refitting step after TL-ANDI.
In the proposed TL-ANDI, the target data are first split into calibration and validation folds.
The calibration fold is used to construct the candidate predictors, while the validation fold is used only to select the tuning parameters and the final candidate predictor.
After this validation selection step, the refitting step fixes the selected configuration and refits the corresponding prediction procedure using the full target dataset $\mathcal D^{(\tar)}$ before generating test predictions.
For a selected TL-ANDI predictor, we re-execute the vanilla transfer learning procedure in Algorithm~\ref{alg-tl} with the selected distilled source context and all target samples for residual calibration.
For a selected target-only predictor, the target-only model is refitted using all target samples.

To evaluate the contribution of the final refitting step, we compare the Mean Squared Error (MSE) of TL-ANDI before and after final refitting under the same regression simulation settings as in Section~\ref{sec4-regression}. 
These settings include both homogeneous source in Section~\ref{sec:regression-homo} and heterogeneous source in Section~\ref{sec:regression-hetero}, covering linear and nonlinear distribution shifts under the main regression simulation settings.
Table~\ref{tab:refit-effect} reports the results.
Across all homogeneous and heterogeneous source distribution settings, final refitting consistently reduces the MSE.
This empirical improvement supports our practice of reporting the refitted TL-ANDI results in the main experiments.

\begin{table}[H]
\centering
\caption{Effect of final refitting after TL-ANDI in homogeneous source and heterogeneous source with linear shift and nonlinear shift settings. The two middle columns compare the MSE before and after refitting the selected procedure using the full target dataset. The last column reports the percentage of MSE reduction due to final refitting.}
\label{tab:refit-effect}
\begin{tabular}{llccc}
\toprule
\multirow{2}{*}{Scenario}
& \multirow{2}{*}{$n_{\max}$}
& \multicolumn{2}{c}{MSE}
& \multirow{2}{*}{MSE Reduction (\%)} \\
\cmidrule(lr){3-4}
& & Without refitting & With refitting & \\
\midrule
\multirow{2}{*}{Homogeneous linear}      & 500  & 0.3488 & 0.2225 & 36.21\% \\
                                         & 1000 & 0.2956 & 0.1760 & 40.46\% \\
\midrule
\multirow{2}{*}{Homogeneous nonlinear}   & 500  & 0.3797 & 0.3074 & 19.04\% \\
                                         & 1000 & 0.3368 & 0.2685 & 20.28\% \\
\midrule
\multirow{2}{*}{Heterogeneous linear}    & 500  & 0.7544 & 0.6471 & 14.22\% \\
                                         & 1000 & 0.6649 & 0.5562 & 16.35\% \\
\midrule
\multirow{2}{*}{Heterogeneous nonlinear} & 500  & 0.8222 & 0.7452 & 9.37\%  \\
                                         & 1000 & 0.7222 & 0.6540 & 9.44\%  \\
\bottomrule
\end{tabular}
\end{table}

\subsection{Target-only Versus Vanilla-TL}
\label{app-al1-targetonly-sensitivity}

We further conduct a sensitivity experiment to compare Target-only and Vanilla-TL, where Vanilla-TL corresponds to Algorithm~\ref{alg-tl}, under homogeneous source settings. The purpose of this experiment is to understand when the vanilla residual-based transfer procedure is beneficial.

In this experiment, we consider the homogeneous linear and homogeneous nonlinear posterior-shift settings. 
We vary the target sample size $n_{\tar}\in\{100,150,300,500\}$
and the source-to-target sample size ratio $r=\frac{n_{\src}}{n_{\tar}}\in\{0.5,1,2,5\}.$
Thus, for each pair $(n_{\tar},r)$, the source sample size is set to
$n_{\src}=r n_{\tar}.$
The data-generating process is the same as in the main simulations in Section \ref{sec:regression-homo}.
We compare two methods: Target-only fits the base regressor using only the target data, while  Vanilla-TL first fits a source predictor using all available source samples, then fits a residual predictor on the target residuals, and finally predicts the test response by adding the estimated source prediction and target residual correction. 

Figures~\ref{fig:al1-targetonly-linear} and~\ref{fig:al1-targetonly-nonlinear} report the results for the homogeneous source with linear shift and nonlinear shift settings, respectively. Across both settings, Vanilla-TL generally improves over the Target-only baseline. This pattern is consistent with the intuition that, under homogeneous source conditions, additional source samples provide useful auxiliary information for estimating the target-domain prediction rule. Moreover, for both the Target-only predictor and Vanilla-TL, the variability of the MSE decreases as the target sample size increases.

Overall, this experiment confirms that Vanilla-TL can be useful under homogeneous source conditions, for both linear and nonlinear posterior-shift settings.

\begin{figure}[H]
    \centering
    \includegraphics[width=0.95\linewidth]{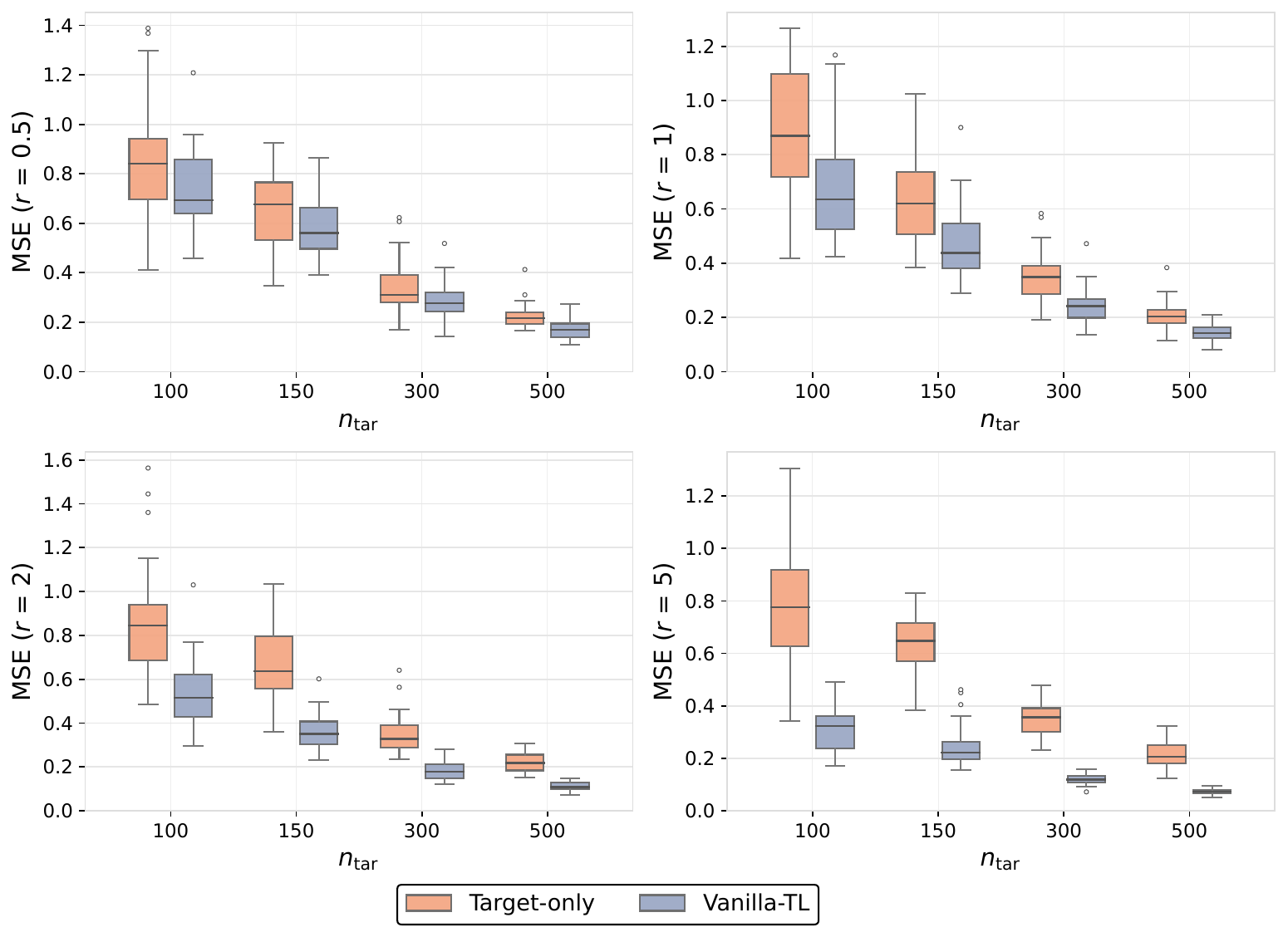}
    \caption{Mean Squared Error (MSE) of the Target-only baseline and Vanilla-TL under the homogeneous source with setting (a) linear posterior shift. The boxplots present the MSE distributions across target sample sizes $n_{\tar}\in\{100,150,300,500\}$ and source-to-target sample ratios $r=n_{\src}/n_{\tar}\in\{0.5,1,2,5\}$. The results are summarized over 30 independent replications.}
    \label{fig:al1-targetonly-linear}
\end{figure}

\begin{figure}[H]
    \centering
    \includegraphics[width=0.95\linewidth]{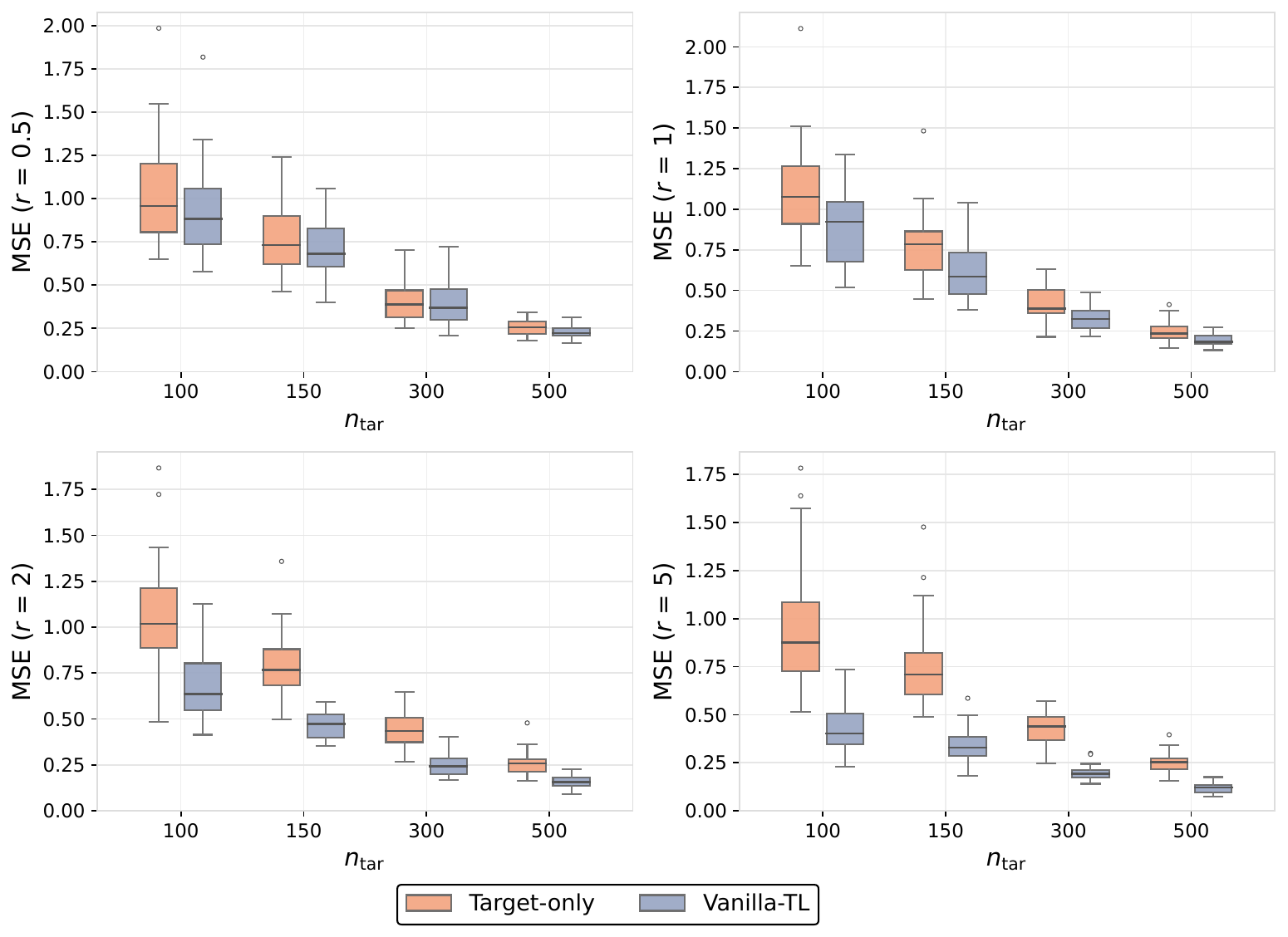}
    \caption{Mean Squared Error (MSE) of the Target-only baseline and Vanilla-TL under the homogeneous source with setting (b) nonlinear posterior shift. The boxplots present the MSE distributions across target sample sizes $n_{\tar}\in\{100,150,300,500\}$ and source-to-target sample size ratios $r=n_{\src}/n_{\tar}\in\{0.5,1,2,5\}$. The results are summarized over 30 independent replications.}
    \label{fig:al1-targetonly-nonlinear}
\end{figure}

\subsection{Heterogeneous Source Overlap Sensitivity Analysis}
\label{app-muoverlap}
To further examine the robustness of TL-ANDI under heterogeneous source distributions, we conduct an additional sensitivity experiment by varying the degree of covariate overlap between two source subpopulations. 

We keep the main heterogeneous simulation design unchanged, with $p=10$, $n_{\src}=20000$, $n_{\tar}=150$, $n_{\test}=1000$, and $n_{\max}\in\{500,1000\}$. The source data consist of two equally sized subpopulations. For $1\leq i\leq n_{\src}/2$, the source covariates are drawn from
\[
    x_{i,j}^{(\src)} \stackrel{i.i.d.}{\sim} \mathcal{N}(0,0.5^2),
    \qquad 1\leq j\leq 10.
\]
For $n_{\src}/2<i\leq n_{\src}$, the source covariates are drawn from
\[
    x_{i,j}^{(\src)} \stackrel{i.i.d.}{\sim} \mathcal{N}(\mu,1),
    \qquad 1\leq j\leq 10.
\]
The target and test covariates are generated in the same way as in the main heterogeneous simulation:
\[
     x_{i,j}^{(\tar)},x_{i,j}^{(\test)} \stackrel{i.i.d.}{\sim} \mathcal{N}(0,0.6^2),
    \qquad 1\leq j\leq 10.
\]
The response models for the two source subpopulations, the target and test data are also kept the same as in the main heterogeneous source setting in Section \ref{sec:regression-hetero}. 

We vary $\mu\in\{0,0.2,0.5,1.0\}.$
A smaller value of $\mu$ corresponds to stronger covariate overlap between the two source subpopulations. In particular, $\mu=0$ represents the most challenging case, where the two source subpopulations substantially overlap in covariate space. For each setting, we repeat the experiment over 30 random seeds.

\begin{figure}[t]
    \centering
    \includegraphics[width=0.95\linewidth]{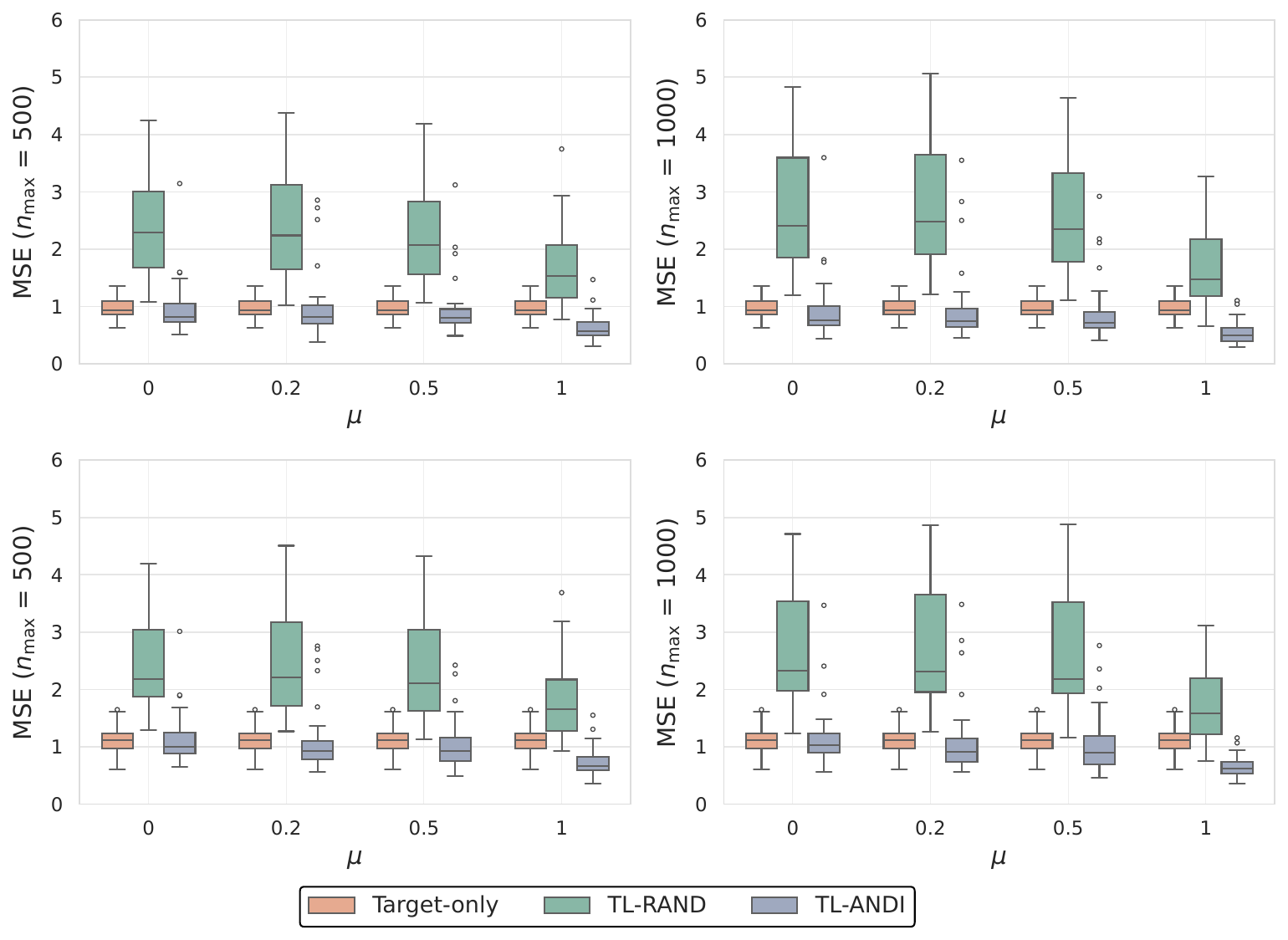}
    \caption{ Mean Squared Error (MSE) among the Target-only baseline, TL-RAND, and our proposed TL-ANDI method under heterogeneous source settings with varying source-component overlap. The boxplots present the MSE distributions for setting (a) linear posterior shift (first row) and setting (b) nonlinear posterior shift (second row), across $\mu\in\{0,0.2,0.5,1.0\}$. The left and right columns correspond to context budgets of $n_{\max}=500$ and $n_{\max}=1000$, respectively. The results are summarized over 30 independent replications.}
    \label{fig:hetero-xsrc2-sensitivity}
\end{figure}

Figure~\ref{fig:hetero-xsrc2-sensitivity} reports the results under the heterogeneous source with both linear and nonlinear posterior-shift settings. Across all overlap levels, TL-RAND suffers from substantially larger MSE and higher variability than the other methods, confirming that naively incorporating randomly selected source samples can lead to severe negative transfer under internal source heterogeneity. In contrast, TL-ANDI consistently achieves the lowest MSE across both linear and nonlinear settings and under both context budgets. This shows that TL-ANDI remains effective even in the high-overlap cases $\mu=0$ and $\mu=0.2$.

Overall, this sensitivity analysis shows that TL-ANDI is robust to source-component overlap.
OT anchoring can identify source anchors that are more compatible with the target task, and the local distillation step does not break down when nearby source samples may come from different posterior distributions.

\subsection{Target Data Size Sensitivity Analysis}
\label{app-tarsize}
We further evaluate the robustness of TL-ANDI when the target sample size is limited. This experiment is motivated by the fact that the OT anchoring step relies on a target-only predictor to estimate posterior alignment. Since this predictor is trained from the available target data, it is important to examine whether TL-ANDI remains reliable when $n_{\tar}$ is small.

We keep the heterogeneous simulation design in Section \ref{sec:regression-hetero} unchanged and vary the target sample size $n_{\tar}\in\{100,150,300\}.$
The source sample size and test sample size are fixed as $n_{\src}=20000$ and $n_{\test}=1000$, respectively. We consider both heterogeneous linear and heterogeneous nonlinear posterior-shift settings, and evaluate two context budgets
$n_{\max}\in\{500,1000\}.$
We repeat each experiment over 30 random seeds and report the Mean Squared Error (MSE) with respect to the true conditional mean. The compared methods are Target-only, TL-RAND, and the proposed TL-ANDI. For TL-ANDI, the hyper-parameters are set identically to those utilized in the simulation studies.

Figure~\ref{fig:hetero-ntar-sensitivity} summarizes the results. As expected, the performance of the Target-only baseline improves as $n_{\tar}$ increases, since more target samples are available for fitting the target task directly. However, TL-RAND remains unstable across all target sample sizes and has the largest MSE among the three methods. 
In contrast, TL-ANDI is consistently robust across all values of $n_{\tar}$. Even in the most challenging case $n_{\tar}=100$, where the target-only model used for posterior alignment is trained from very limited target data, TL-ANDI still achieves substantially lower MSE than TL-RAND and also improves over the Target-only baseline in both linear and nonlinear shift settings. As $n_{\tar}$ increases from 100 to 300, TL-ANDI remains the best-performing method in most settings, and its variability also decreases. This suggests that the proposed OT anchoring does not overly depend on a highly accurate target-only estimator; the validation and residual calibration steps help stabilize the transfer procedure under limited target information.

Overall, the results support the reliability of TL-ANDI even with small target sample size $n_{\tar}$.

\begin{figure}[t]
    \centering
    \includegraphics[width=0.95\linewidth]{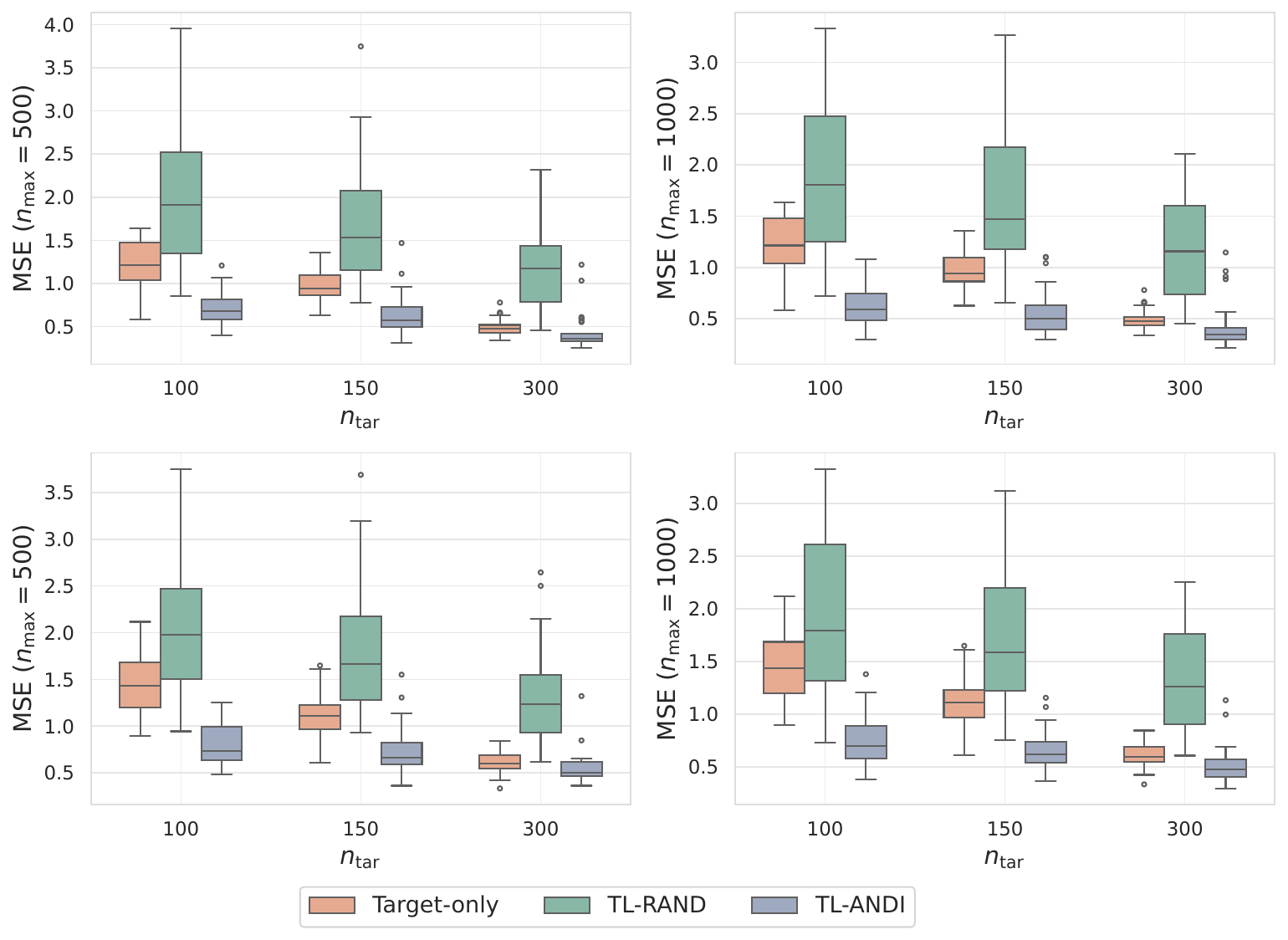}
    \caption{Mean Squared Error (MSE) of the Target-only baseline, TL-RAND, and our proposed TL-ANDI method under heterogeneous source settings with varying target sample sizes. The boxplots present the MSE distributions for setting (a) linear posterior shift (first row) and setting (b) nonlinear posterior shift (second row), across $n_{\tar}\in\{100,150,300\}$. The left and right columns correspond to context budgets of $n_{\max}=500$ and $n_{\max}=1000$, respectively. The results are summarized over 30 independent replications. 
   }
    \label{fig:hetero-ntar-sensitivity}
\end{figure}

\end{document}